\documentclass{article}
\usepackage{tikz}
\usepackage{xcolor}
\usepackage{makecell}
\usetikzlibrary{positioning, arrows.meta}

\usepackage{microtype}
\usepackage{graphicx}
\usepackage{subcaption}
\usepackage{booktabs} 
\usepackage[table]{xcolor} 
\usepackage{tcolorbox}    
\usepackage[linesnumbered,ruled,vlined]{algorithm2e}
\usepackage{amsmath} 
\usepackage{longtable}
\usepackage{siunitx}  
\usepackage{tabularx}
\usepackage{multirow}   
\usepackage[autostyle=true]{csquotes}
\usepackage{tabularx}
\usepackage{placeins}
\newcommand{\bigO}{\mathcal O}
\newcommand{\R}{\mathbb R}
\newcommand{\pipo}{\pi_{\theta}}
\newcommand{\picvx}{\pi_{\theta}^{\textrm{cvx}}}

\newcommand{\1}{\mathbf{1}}
\usepackage{booktabs}     
\usepackage{siunitx}      
\usepackage[table]{xcolor} 
\usepackage{multirow}     
\usepackage{siunitx}
\usepackage{amsmath}
\usepackage{amssymb}
\usepackage{mathtools}
\usepackage{amsthm}
\usepackage{bm}
\newcolumntype{Y}{>{\centering\arraybackslash}X} 
\usepackage{subcaption}

\usepackage{xurl}
\usepackage{array}
\usepackage{float}
\usepackage{placeins}
\usepackage{url}
\usepackage{xcolor}
\usepackage{amsfonts}
\usepackage[linesnumbered,ruled,vlined]{algorithm2e}
\usepackage[table]{xcolor}
\usepackage{csquotes} 
\newcommand{\mnote}[1]{}
\renewcommand{\mnote}[1]{\textcolor{blue}{\textbf{[Note: #1]}}}

\usepackage{hyperref}

\usepackage[accepted]{icml2026}

\usepackage{amsmath}
\usepackage{amssymb}
\usepackage{mathtools}
\usepackage{amsthm}

\usepackage[capitalize,noabbrev]{cleveref}

\theoremstyle{plain}
\newtheorem{theorem}{Theorem}[section]
\newtheorem{proposition}[theorem]{Proposition}

\theoremstyle{definition}

\theoremstyle{remark}

\usepackage[textsize=tiny]{todonotes}

\icmltitlerunning{Convex Optimization for Alignment and Preference Learning on a Single GPU}

\begin{document}

\twocolumn[
  \icmltitle{Convex Optimization for Alignment and Preference Learning on a Single GPU}



  \icmlsetsymbol{equal}{*}

  \begin{icmlauthorlist}
    \icmlauthor{Miria Feng}{yyy}
    \icmlauthor{Mert Pilanci}{yyy}

  \end{icmlauthorlist}

  \icmlaffiliation{yyy}{Department of Electrical Engineering, Stanford University, California, United States}

  \icmlcorrespondingauthor{Miria Feng}{miria00@stanford.edu}

  \icmlkeywords{Machine Learning, ICML}

  \vskip 0.3in
]



\printAffiliationsAndNotice{}  

\begin{abstract}
Fine-tuning large language models (LLMs) to align with human preferences has driven the success of systems such as Gemini and ChatGPT. 
However, approaches like Reinforcement Learning from Human Feedback (RLHF) remain computationally expensive and complex. 
Direct Preference Optimization (DPO) offers a simpler alternative but has limitations such as inconsistent ranking accuracy, high dependence on GPU resources, and expensive hyperparameter tuning. 
We propose the \textbf{C}onvex \textbf{O}ptimization for \textbf{A}lignment and Preference \textbf{L}earning \textbf{A}lgorithm (COALA): a novel lightweight strategy with strong theoretical guarantees. By leveraging the convex optimization reformulation of neural networks, COALA eliminates the need for a reference model and obtains significant reduction in both training time and VRAM consumption, thus enabling efficient training on a single GPU. 
Experiments across four datasets—including a \num{26621}-sample synthetic Educational Feedback dataset—and six models (including Llama-3.1-8B) demonstrate COALA's competitive performance and efficiency while utilizing as little as ${\sim}17.6\%$ of DPO's total TFLOPs.
COALA exhibits stable, monotonically increasing rewards and reaches peak margins in significantly shorter time in comparison to traditional methods such as DPO and ORPO. 
To the best of our knowledge, this is the first time convex optimization has been effectively applied to preference fine-tuning of LLMs. 
\end{abstract}
\section{Introduction}
\label{sec:introduction}

Large Language Models (LLMs) have been trained on increasing amounts of data to capture semantic language patterns and scale to solve more complex problems. 
The main paradigm combines pre-training and fine-tuning LLMs to achieve aligned user-preferable responses, and train personalized Language Model (LM) assistants for widespread applications \citep{gemini2023, chatgpt2023, claude2023}.
Reinforcement Learning from Human Feedback (RLHF) \citep{stiennon2020learning, ouyang2022training, christiano2017deep, wang2023rlhf} demonstrates impressive results for preference alignment, and utilizes a three-step approach of Supervised Fine-Tuning (SFT), reward model training, and policy optimization. 
However, its complex multi-stage methodology presents several optimization challenges, requires humans in the loop, and is notably compute resource-intensive.
This has driven timely and significant research on resource-efficient preference learning strategies, especially as energy demands outpace infrastructure \citep{chen2025electricity}. 








The Direct Preference Optimization (DPO) \citep{rafailov2024direct} algorithm proposes a simpler and more computationally lightweight alternative to aligning LLMs for user-preferred responses. 
DPO parametrizes the reward function instead of learning an explicit reward model and incorporates this into the Bradley-Terry ranking objective \citep{bradley1952rank}. 
Although simpler, this also yields certain drawbacks: requiring a reference model to stabilize training incurs additional memory costs to host two models in order to train a single model, the approximately 10$\times$ smaller learning rate (versus SFT) \citep{alignmenthandbook2023} slows optimization, while a mismatch between the reward optimized in training and the log-likelihood optimized during inference can lead to unstable reward margin gains \citep{meng2024simpo}. 

To address these issues, recent studies have proposed eliminating the reference model to reduce memory overhead \citep{jung2025binary, meng2024simpo}, advanced deeper theoretical understanding on the behavior of RLHF and DPO with in-depth analysis \citep{azar2024general, ethayarajh2024kto, swamy2025allroads}, and reduced dependence on the prerequisite SFT stage, which is typically imperative to convergence \citep{hong2024orpo, xu2024contrastive}. However, these techniques also introduce additional brittle hyperparameters, which the authors note are \textit{crucial} to achieving good performance, or require small learning rates on the order of $1 \times 10^{-9}$ for stable convergence. 
The large reliance on heuristic-driven methods may also result in random model ranking accuracy even after time-extensive training \citep{chen2024preference}.

We present COALA, a fast and lightweight framework for effective preference alignment of LLMs on a single GPU. 
COALA leverages a \emph{convex} two-layer neural network (cvxNN) on top of a pre-trained model for fast convergence with theoretical guarantees, and thus mitigates the instability of offline alignment via consistent reward margin increments.
Unlike existing DPO style objectives, COALA utilizes an Alternating Direction Method of Multipliers (ADMM) \citep{boyd2011distributed} approach to enable scalable, near-hyperparameter-free optimization.
We implement COALA in JAX \citep{bradbury2018jax} and efficiently execute Just-In-Time (JIT) compilation to maximize VRAM throughput.
As a result, COALA fine-tunes models such as Llama-3.1-8B on a single RTX-4090 GPU \citep{nvidia_rtx4090} and demonstrates competitive performance on multiple benchmarks, including AlpacaEval2 \citep{alpaca_eval}, ArenaHard \citep{li2024crowdsourced}, and MT-Bench \citep{zheng2023judging}. 
Comprehensive experiments span six models $\times$ four datasets $\times$ four methods. Results presented total over \textbf{\num{216} training runs}, and TFLOPs metrics further validate COALA's substantial compute and power efficiency.
Since auto-run benchmarks are ultimately surrogate for measuring real human preference, we conduct fair double-blind evaluation with \textbf{107 real human samples} to assess the validity of benchmark scores against actual human feedback, and establish COALA's practical effectiveness in a real world scenario. Our main contributions are summarized as follows:

\begin{itemize}
    \item We introduce COALA: a more interpretable and theory-backed convex framework that leverages principled optimization to enable effective, single-GPU preference alignment.
    
    \item We prove COALA's convergence guarantees. This theoretical analysis supports COALA's exhibited stable convergence, while mitigating reliance on hyperparameter tuning and broad heuristics.  

    \item We present EduFeedback\footnote{\url{https://huggingface.co/datasets/miria0/EduFeedback}}: the open-source dataset comprised of \num{26621} student-tutor conversations. We additionally introduce the \textbf{\enquote{Alternating Population Strategy}}, to uniquely derive \num{65606} preference training pairs, eliminating the need for external re-ranking models and establishing a more efficient paradigm for preference dataset synthesis.
    
    \item Our modular open-source JAX codebase\footnote{\url{https://github.com/pilancilab/COALA}} ensures ease of reproducibility for ongoing research, as well as practical on-premises single GPU use cases. 
\end{itemize}

\section{Related Work}
\label{sec:related_work}
Preference fine-tuning LLMs can be classified as three distinct strategies. 
\textbf{(1)} Initial algorithms of zero-shot and few-shot in-context learning \citep{brown2020language} rely on prompt engineering. 
This method does not require extensive compute, but is unable to tackle complex tasks due to reliability issues. 
\textbf{(2)} More sophisticated algorithms use reinforcement learning to align model output with user preferences. Successful algorithms in this class (such as RLHF and RLAIF \citep{lee2023rlaif}) have been able to create conversational LLMs such as Google Gemini \citep{gemini2023} and ChatGPT \citep{chatgpt2023}. However, despite their impressive performance, the high computational costs of these methods pose a substantial barrier to entry for many applications.
\textbf{(3)} DPO-style methods leverage the \citet{bradley1952rank} model to provide simple yet often performant learning algorithms that do not require explicit reward modeling. 
Yet good results based on these methods consistently rely on extensive hyperparameter grid-search \citep{meng2024simpo}, and are largely heuristics-driven. 


As LLMs become more widely adopted, there is also escalating awareness that scaling compute alone is increasingly unsustainable economically, environmentally, and in terms of diminishing returns \citep{singh2025survey}. This has motivated recent work to make progress through avenues such as repeated sampling \citep{brown2024large}, creative chain-of-thought reasoning \citep{wei2022chain}, and theoretically interpretable techniques \citep{singh2023augmenting}. Single GPU methods are a timely and practical direction for preference fine-tuning, especially as local accelerators become increasingly powerful \citep{apple_m_launch}. Targeting a single device improves robustness (fewer distributed failure modes), latency (no cross‑node synchronization), and data locality/security (privacy‑sensitive deployment with on-premises options). This trend is reflected in recent systems that deliver machine learning applications in text, vision, and multimodal settings \citep{sheng2023flexgen, geiping2023cramming, ning2024inf, kwon2020nimble, tragakis2024one}. Human preference alignment is ideally positioned in this emerging paradigm, which seeks low‑latency, resource‑aware algorithms without sacrificing generative quality. 


\citet{bengio2005convex} have previously shown that it is possible to characterize the optimization problem for neural networks as a convex program. \citet{pilanci2020neural} further developed exact convex reformulations for training a two-layer ReLU neural network. 
The reformulation uses semi-infinite duality theory to show that two-layer neural networks (NNs) with ReLU activations and weight decay regularization may be expressed as a linear model with group lasso penalty and polyhedral cone constraints.  
This yields both practical benefits in implementation and theoretical advantages in analyzing the optimization of non-convex landscapes in NNs. 
\citet{bai2023efficient} and \citet{fengcronos} have recently proposed cvxNN methods based on ADMM techniques to solve high-dimensional deep learning tasks. 
ADMM offers several advantages, such as its robustness against hyperparameter selection, linear decomposability for distributed optimization, and immunity to vanishing/exploding gradients. 
Its natural parallelization ability makes ADMM particularly suitable for optimization in deep learning, where scalability is crucial.
The COALA algorithm combines these features to effectively preference fine-tune LLMs faster with less memory and compute while demonstrating competitive performance.
\section{Convex Neural Networks}
\label{sec:cvx_nn}
This section provides background on convex neural networks (cvxNN), which is the basis for our introduction of the COALA algorithm. Our approach is motivated by defining a principled preference fine-tuning objective with interpretable convergence guarantees.
\subsection{Two-layer ReLU Networks}
Given input $x\in \mathbb R^d$, the classic two-layer ReLU network is given by:
\begin{equation}
\label{eq:ncvx_mlp}
f(x) = \sum_{j=1}^{m}(\Theta_{1j}x)_{+}\theta_{2j},
\end{equation}
where $\Theta_{1} \in \R^{m\times d}$, $\theta_2 \in \R^m$ are weights of the first and last layer respectively, and $(\cdot)_{+} = \max\{\cdot, 0\}$ is the ReLU activation function.  

Given targets $y\in \R^n$, the network in \eqref{eq:ncvx_mlp} is trained by minimizing the following non-convex loss function:
\begin{equation}
\label{eq:mlp_ncvx_obj}
    \min_{\Theta_1, \theta_2} \ell\left(f_{\Theta_1,\theta_2}(X), y\right) + \frac{\beta}{2} \sum_{j=1}^m \left(||\Theta_{1j}||_2^2 + (\theta_{2j})^2\right),
\end{equation}
where $\ell:\R^n \mapsto \R$ is the loss function, $X\in \R^{n\times d}$ is the data matrix, and $\beta\geq0$ is the regularization strength. 
The non-convex nature of \eqref{eq:mlp_ncvx_obj} makes its solution challenging, 
since the optimizer typically needs meticulous tuning of hyperparameters to ensure successful training.
This requires many expensive iterations of running the optimizer across multiple hyperparameter configurations in a grid search to obtain good performance.
This dramatically contrasts with the convex optimization framework, where algorithms have strong convergence guarantees and involve minimal hyperparameters.
It is desirable to maintain the expressive capabilities of such neural networks while still preserving the computational advantages of convex optimization.

\subsection{Convex reformulation}
\citet{pilanci2020neural} have shown \eqref{eq:mlp_ncvx_obj} admits a convex reformulation, thus alleviating the inherent difficulties of the non-convex landscape in deep learning.
When condition $m\geq m^*$ is satisfied for some $m\geq n+1$, the reformulation has the same optimal value as the original non-convex problem and no information is lost.

The convex reformulation is based on enumerating the actions of all possible ReLU activation patterns on the data matrix $X$. 
These activation patterns act as separating hyperplanes, which essentially multiply the rows of $X$ by 0 or 1 and can be represented by diagonal matrices.
For fixed $X$, the set of all possible ReLU activation patterns may be expressed as
\[
\mathcal D_X = \left\{D = \textup{diag}\left(\1(Xv\geq 0)\right): v\in \R^d\right\}.
\]
The cardinality of $\mathcal D_X$ grows as $|\mathcal D_X| = \bigO\left(r(n/r)^r\right)$, where $r \coloneqq \textup{rank}(X)$ \citep{pilanci2020neural}.
Given $D_i \in \mathcal D_X$, the set of vectors $v$ for which $(Xv)_+ = D_iXv$, is given by the following convex cone: $\mathcal K_i = \{v\in \R^d: (2D_i-I)Xv\geq 0\}$.

The exponential size of $\mathcal D_X$ make its complete enumeration impractical.
Instead, we work with a subset based on sampling $P$ patterns from $\mathcal D_X$ for tractability:
\begin{equation}
\label{eq:mlp_cvx_obj}
    \begin{aligned}
    \min_{(v_i, w_i)_{i=1}^P} &\ell\left(\sum_{i=1}^P D_i X(v_i - w_i), y\right) + \beta \sum_{i=1}^P ||v_i||_2 + ||w_i||_2 \\ &\textup{s.t.}~v_i,~w_i \in \mathcal K_i \quad \forall i \in [P].
    \end{aligned}
\end{equation}
Although \eqref{eq:mlp_cvx_obj} is based on subsampling patterns in the convex reformulation, it can be shown under mild conditions that \eqref{eq:mlp_cvx_obj} still has the same optimal solution as \eqref{eq:mlp_ncvx_obj} \citep{mishkin2022fast}. 
The recent work of \citet{kimconvex} also prove that the difference is negligible even when they are not equal. 
Therefore we can confidently work with the convex program in \eqref{eq:mlp_cvx_obj}. 

In this paper we denote $\ell$ to be the mean-square error loss.
Recent work \citep{bai2018generalized} has shown that by adding slack variables, \eqref{eq:mlp_cvx_obj} can be written as:
\begin{equation}
\label{eq:cvx_mlp}
  \min_{v, s, u} ||Fu - y||_2^2 + \beta ||v||_{2, 1} + \mathbb{I}_{\ge 0}(s) \quad \mathrm{s.t.} \ \ u = v, \ G u = s
\end{equation}
The matrix $F \in \mathbb{R}^{n \times 2 d P}$ is block-wise constructed by $D_i X$ terms.
\section{COALA}
\label{sec:cvx_dpo}
In this section, we introduce comprehensive methodology of the \textbf{C}onvex \textbf{O}ptimization for \textbf{A}lignment Preference \textbf{L}earning \textbf{A}lgorithm (\textbf{COALA}). Appendix \ref{app:step} gives a more explicit step-by-step through the COALA method. 

\subsection{Convex Preference Optimization Framework}
In standard DPO, the goal is to obtain a good policy that is aligned with human preferences. The policy network $\pipo$ is first initialized with the weights of a pre-trained network.
It then aligns the policy model by solving the optimization problem
\begin{align}
\label{eq:dpo_loss}
& L_{\text{DPO}}(\pi_{\theta}; \pi_{\text{ref}}) = \\
&-\mathbb{E}\left[ \log \sigma \left( \beta \log\frac{\pi_{\theta}(y_w | x)}{\pi_{\text{ref}}(y_w | x)} -\beta \log \frac{\pi_{\theta}(y_l | x)}{\pi_{\text{ref}}(y_l | x)} \right) \right]. \nonumber
\end{align}

The DPO objective in \eqref{eq:dpo_loss} is a large-scale non-convex optimization, which is challenging to solve. We first reformulate this learning problem using cvxNN, in order to admit more elegant optimization techniques.
We adopt a modified Bradley-Terry model with an offset parameter $\gamma>0$. 
\begin{equation}
\label{eq:pref_model}
p(y_w \succ y_l \vert x) = \sigma(r(x, y_w)-r(x,y_l) - \gamma),
\end{equation}
COALA then uses the un-normalized log-likelihood as its rewards function.
\begin{equation}
\label{eq:reward}
    r(x,y) = \beta \log \pi(y\vert x).
\end{equation}

Instead of taking $\pi$ in \eqref{eq:reward} to be a traditional neural network (NN) model, we replace it with a cvxNN. Specifically, we take a pre-trained model $f_{\theta_{\textrm{pre}}}(x)$ and stack a two-layer convex neural network  $g_{\Theta_1,\theta_2}^{\textrm{cvx}}$ on top to serve as a binary preference classifier.
$g_{\Theta_1,\theta_2}^{\textrm{cvx}}$ is then trained by solving the convex optimization problem in \eqref{eq:cvx_mlp}.
Letting $\theta = (\theta_{\textrm{pre}}, \Theta_1, \theta_2)$, the resulting policy is then given by: 
\[
\picvx(y|x) \coloneqq \frac{1}{1+\exp\left(-yg_{\Theta_1,\theta_2}^{\textrm{cvx}}\left(f_{\theta_{\textrm{pre}}}(x)\right)\right)}. 
\]

Instead of executing preference optimization with the weights of the entire model $g_{\Theta_1,\theta_2}^{\textrm{cvx}}\circ f_{\theta_{\textrm{pre}}}$, we freeze the weights of $ f_{\theta_{\textrm{pre}}}$ and freeze $\Theta_1$ in $g_{\Theta_1,\theta_2}^{\textrm{cvx}}$.
Inserting \eqref{eq:reward} into \eqref{eq:pref_model} with $\pi(y\vert x)$ replaced by $\picvx(y\vert x)$, yields the COALA objective:
\begin{align}
\label{eq:cvx_loss}
& \min_{\theta_2}~L_{\text{COALA}}(\pi^{\textrm{cvx}}_{\theta_2})\coloneqq \\
&-\mathbb{E}_{(x, y_w, y_l) \sim \mathcal{D}} \left[ \log \sigma \left( \beta \log\frac{\pi^{\textrm{cvx}}_{\theta_2}(y_w | x)}{\pi_{\theta_2}^{\textrm{cvx}}(y_l | x)} - \gamma \right) \right] \nonumber,
\end{align}

Notably, \eqref{eq:cvx_loss} is reference-free in comparison to \eqref{eq:dpo_loss} yet does not incur additional scaling hyperparameters for stability.
This is supported in recent work \citep{hong2024orpo, meng2024simpo, gupta2024refa}, where we find the reference model unnecessary.

In addition to improved stability and memory efficiency by removing dependence on the reference model, the advantage of solving \eqref{eq:cvx_loss} over \eqref{eq:dpo_loss} also gives higher computational tractability. The following proposition shows \eqref{eq:cvx_loss} is convex.
\begin{proposition}[COALA Loss is Convex]
\label{prop:cvx_dpo_loss}
The optimization problem in \eqref{eq:cvx_loss} may be written as
\begin{equation}
\label{eq:cvx_dpo_loss}
   \underset{\theta_2}{\textup{min}}~~\mathbb{E} \left[\log\left(1+\exp\left(-\beta y_w\theta_2^{T}(\Theta_1 f_{\theta_{\textup{pre}}}(x))_{+}+\gamma\right)\right)\right].
\end{equation}
This objective is convex as \eqref{eq:cvx_dpo_loss} is a logistic regression problem in $\theta_2$. 
\end{proposition}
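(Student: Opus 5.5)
The plan is to substitute the definition of $\picvx$ into the objective \eqref{eq:cvx_loss} and simplify it algebraically until it takes the form \eqref{eq:cvx_dpo_loss}; convexity then follows from standard facts about logistic loss.

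\textbf{Step 1: write the network output as a function linear in $\theta_2$.} Since $f_{\theta_{\textup{pre}}}$ and $\Theta_1$ are frozen, set $h(x) \coloneqq (\Theta_1 f_{\theta_{\textup{pre}}}(x))_+$. This is a fixed feature vector. The network output is then $z(x) \coloneqq g^{\textrm{cvx}}_{\Theta_1,\theta_2}(f_{\theta_{\textup{pre}}}(x)) = \theta_2^T h(x)$, which is linear in $\theta_2$. In the convex parametrization, $\theta_2$ is read as the stacked $(v_i - w_i)$ coefficients acting on the fixed features $D_i X$; the argument needs only this linearity.

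\textbf{Step 2: compute the log-ratio.} By definition, $\picvx(y|x) = \sigma(y\, z(x))$, where $y$ is a binary label. We treat $y_w$ and $y_l$ as the labels $\pm1$ of the preferred and dispreferred classes, so $y_l = -y_w$. Then
\[
\log\frac{\picvx(y_w|x)}{\picvx(y_l|x)} = \log\frac{\sigma(y_w z)}{\sigma(-y_w z)} = y_w z,
\]
using the identity $\sigma(t)/\sigma(-t) = e^{t}$. Substituting into \eqref{eq:cvx_loss} and using $-\log\sigma(s) = \log(1+e^{-s})$, the loss becomes
\[
\mathbb{E}\left[\log\left(1+\exp\left(-\beta y_w \theta_2^T h(x) + \gamma\right)\right)\right],
\]
which is exactly \eqref{eq:cvx_dpo_loss}.

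\textbf{Step 3: convexity.} The scalar function $\phi(s) = \log(1+e^{s})$ is convex, since $\phi''(s) = \sigma(s)(1-\sigma(s)) \ge 0$. Its argument $-\beta y_w \theta_2^T h(x) + \gamma$ is affine in $\theta_2$, because $\beta$, $\gamma$, $y_w$ and $h(x)$ do not depend on $\theta_2$. A convex function composed with an affine map is convex, and taking an expectation (a nonnegative combination) preserves convexity. Hence the objective is convex: it is a logistic regression problem with features $\beta y_w h(x)$ and a fixed offset $\gamma$.

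\textbf{Main obstacle.} The only delicate point is the modeling step that identifies $y_l = -y_w$, that is, interpreting $\picvx$ as a binary preference classifier so that the log-ratio collapses to $y_w z$. Once this interpretation is fixed, the rest of the argument is routine.
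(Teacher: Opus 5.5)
Your proposal is correct and matches the paper's proof: substitute the sigmoid form of $\picvx$, use $\picvx(y_l|x)=1-\picvx(y_w|x)$ (your $y_l=-y_w$) to collapse the log-ratio to $\beta y_w\theta_2^T(\Theta_1 f_{\theta_{\textup{pre}}}(x))_+$, and recognize a logistic regression in $\theta_2$. You also make explicit two things the paper leaves implicit, namely the binary-label identification and the convexity check (softplus composed with an affine map); your aside reading $\theta_2$ as the stacked $(v_i-w_i)$ does not literally fit the form $\theta_2^T(\Theta_1 f_{\theta_{\textup{pre}}}(x))_+$, but, as you say, only linearity in $\theta_2$ is used.
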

The proof of \cref{prop:cvx_dpo_loss} is provided in \cref{sec:cvx_dpo_pf}.
\Cref{prop:cvx_dpo_loss} shows that $L_{\textup{COALA}}$ is convex. 
Thus, we can solve it to global optimality in polynomial time using more efficient gradient-based optimizers.
Since we only have to train the weights of the final layer of the convex model, COALA requires significantly less computation than existing methods and provides the second major reason why COALA can quickly train on one GPU. 

In summary, throughout our theoretical development the probability object induced by COALA is the \emph{pairwise preference probability} used in the Bradley--Terry model, rather than the full autoregressive language-model sequence probability. More precisely, COALA models the Bernoulli event that a preferred response $y_w$ is ranked above a rejected response $y_l$ conditioned on the prompt $x$, and the corresponding logistic form is derived only in this pairwise-comparison setting. The notation $\pi_\theta^{\mathrm{cvx}}(y \mid x)$ is therefore shorthand for a \emph{preference score/probability} over a comparison outcome, not for the frozen base model's sequence distribution over complete text responses. 


\subsection{COALA Algorithm}

We formally present the pseudocode for COALA in \cref{alg-COALA}.
Stage one of COALA first trains a cvxNN on top of a standard pre-trained model.
In stage two, the final policy model is obtained by solving a convex logistic regression problem in a fine-tuning step. 

The key to maximizing COALA's efficiency in this framework is using the recently introduced CRONOS algorithm \citep{fengcronos} to train $\picvx$, which we now discuss in detail.

\begin{algorithm}
	\centering
	\caption{Convex Preference Optimization (COALA)}
	\label{alg-COALA}
	\begin{algorithmic}
	\INPUT{Dataset $(x, y_w, y_l)$, Pre-trained model $f_{\theta_{\textup{pre}}}(x)$, offset parameter $\gamma$, penalty parameter $\rho>0$}
        \STATE{\textbf{Phase I}: \textbf{Train the policy network}} 
        \STATE{Train $\picvx$ to obtain $(\Theta_1, \theta_2)$ by solving \eqref{eq:cvx_mlp} using CRONOS($\rho)$ (\Cref{alg-ADMM}).}
        \STATE{\textbf{Phase II}: \textbf{Finetuning}}         
        \STATE{Freeze weights of the first layer $\Theta_1$.}
        \STATE{Finetune weights of second layer $\theta_2$ by solving the convex minimization problem \eqref{eq:cvx_dpo_loss}:}
             \STATE{\begin{center}$\min_{\theta_2}~L_{\text{COALA}}(\pi^{\textrm{cvx}}_{\theta_2}) \quad \{\text{Solve via AdamW}\}$\end{center}
             }
	\OUTPUT{$(\Theta_1, \theta_2)$.}
	\end{algorithmic}
\end{algorithm}

\subsubsection{Efficiently Training the Convex Policy Network via CRONOS}

\begin{algorithm}
	\centering
	\caption{CRONOS with Preconditioned Conjugate Gradient (PCG)}
	\label{alg-ADMM}
	\begin{algorithmic}
	\INPUT{penalty parameter $\rho$}
        \REPEAT
        \STATE {$\bm u^{k+1}=\textup{argmin}_{\bm u}\{\&\frac{1}{2}\|F\bm u-y\|^2+\frac{\rho}{2}\|\bm u-\bm v^k+\lambda^k\|_2^2+\frac{\rho}{2}\|G\bm u-\bm s^k+\nu^k\|^2\}$} \COMMENT{Use PCG}

        \STATE{$\begin{bmatrix}
                \bm v^{k+1} \\
                \bm s^{k+1}
                \end{bmatrix} = \textup{argmin}_{\bm v,\bm s} \{\beta\|\bm v\|_{2,1}+\1(\bm s\geq 0)+\frac{\rho}{2}\|\bm u^{k+1}-\bm v+\lambda ^k\|^2$\}} \COMMENT{Primal update}
        \STATE {$\lambda^{k+1} \gets \lambda^{k} + \frac{\gamma_\alpha}{\rho} (\bm u^{k+1} - \bm v^{k+1} )$} \COMMENT{Dual $\lambda$ update}
        \STATE {$\nu^{k+1} \gets \nu^{k} + \frac{\gamma_\alpha}{\rho} (G\bm u^{k+1} - \bm s^{k+1} )$} \COMMENT{Dual $\nu$ update}
    \UNTIL{convergence} 
	\end{algorithmic}
\end{algorithm}

CRONOS \citep{fengcronos} is a specialized version of the Alternating Direction Method of Multipliers (ADMM) \cite{boyd2011distributed}, for solving the optimization problem in \eqref{eq:cvx_mlp}.
The choice of ADMM for solving \eqref{eq:cvx_mlp} in \citet{fengcronos} is predicated on three key properties: \textbf{(1)} It possesses a robust convergence guarantee as shown in \Cref{subsec:dpo_theory}, \textbf{(2)} it lifts memory constraints in LLM classification problems, and \textbf{(3)} it is highly optimized for GPU acceleration and parallelism in JAX. 

\Cref{alg-ADMM} formally presents using CRONOS for solving \cref{eq:cvx_mlp}.
Two subproblems must  first be solved efficiently, then the algorithm only requires vector addition and one matrix-vector product.
The structure of these subproblems makes CRONOS readily compatible with local hardware accelerators and JAX.  

\paragraph{What COALA optimizes.}
COALA does not perform gradient-based updates to the pretrained language model weights. Instead, it freezes the base LLM and learns a convex neural network (cvxNN) head on top of frozen pretrained features. This design is deliberate: by restricting optimization to the convex head, we obtain global optimality guarantees for the trained preference module, substantially reduce memory and compute requirements, and preserve a practical single-GPU training regime. Consequently, COALA should be interpreted as a preference alignment framework built on top of frozen LLM representations, rather than as full-model policy optimization in the style of standard DPO fine-tuning.

\subsection{COALA Convergence Guarantees}
\label{subsec:dpo_theory}
Since COALA interprets the preference alignment task as a convex optimization problem application, it also immediately inherits the rich convergence theory associated with convex algorithms.
In particular, we show that this leads to convergence guarantees for each aspect of the COALA pipeline and robustness to hyperparameter tuning. 
The following result \cite{fengcronos} shows CRONOS converges ergodically at an $\bigO(1/k)$-rate. 

\begin{theorem}[Convergence of ADMM for \eqref{eq:cvx_mlp}]
\label{thm:admm_conv}
Let $\{\delta_k\}_{k\geq 1}$ be some summable sequence. Run \Cref{alg-ADMM} and suppose at each iteration the computed $u^{k+1}$ satisfies:
\begin{align*}
\left\| u^{k+1} - \textup{argmin}_{u} \left\{
    \frac{1}{2} \|F u - y\|^2 \right. \right. & \\
    \quad + \frac{\rho}{2} \|u - v^k + \lambda^k\|_2^2 & \\
    \quad + \left. \left. \frac{\rho}{2} \|G u - s^k + \nu^k\|^2 \right\} 
    \right\| &\leq \delta_k.
\end{align*}
Then after $K$ iterations, the output of CRONOS \cref{alg-ADMM} satisfies:
\begin{align*}
        & \|F\bar u^K-y\|^2+\beta\|\bar v^K\|_{2,1}+\1(\bar s^K\geq 0)-p^\star = \bigO(1/K),\\
        & \left\|\begin{bmatrix}
            I_{2dP} \\
            G
        \end{bmatrix} \bar u^K - 
        \begin{bmatrix}
            \bar v^K \\
            \bar s^{K}
        \end{bmatrix} \right\| = \bigO(1/K).  
\end{align*}
\end{theorem}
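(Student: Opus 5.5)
We view \eqref{eq:cvx_mlp} as a two-block convex program and invoke the standard ergodic analysis of (inexact) ADMM. Group the variables as $x_1 = u$ and $x_2 = (v,s)$, with objective $h_1(u) = \|Fu-y\|^2$ and $h_2(v,s) = \beta\|v\|_{2,1} + \1(s\geq 0)$. The coupling constraint is $Au - B(v,s) = 0$, where $A = \begin{bmatrix} I_{2dP} \\ G\end{bmatrix}$ and $B = I$. Both $h_1$ and $h_2$ are closed, proper and convex, and $A$ has full column rank because it contains the identity. The problem is feasible (take $u=v$ in the cone, $s=Gu$), and since it is a linear-constrained convex program with polyhedral constraint structure, a saddle point $(u^\star,v^\star,s^\star,\lambda^\star,\nu^\star)$ of the Lagrangian exists without any extra constraint qualification. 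The updates in \Cref{alg-ADMM} are exactly the scaled-form ADMM steps: the $u$-step is a strongly convex quadratic solved by PCG, and the $(v,s)$-step is separable. It splits into group soft-thresholding for $v$ and projection onto the nonnegative orthant for $s$.

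\textbf{Exact case.} I would follow the He--Yuan variational-inequality argument. Let $w=(u,v,s,\lambda,\nu)$. First write the optimality conditions of the two subproblems and the dual update. Combining these gives, for every feasible $w$, a one-step inequality of the form
\[
\Phi(\tilde w^{k}) - \Phi(w) + \langle \tilde w^k - w, M(w)\rangle \le \tfrac12\bigl(\|w-w^k\|_H^2 - \|w-w^{k+1}\|_H^2\bigr),
\]
where $\Phi$ is the objective, $M$ is the skew-symmetric monotone operator associated with the constraints, and $H\succeq 0$ depends on $\rho$ and the dual step $\gamma_\alpha$. The step $\gamma_\alpha$ should be taken in the standard range, e.g.\ $\gamma_\alpha \in (0,(1+\sqrt5)/2)$ in the scaled convention. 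Summing over $k=1,\dots,K$ telescopes the right-hand side. Convexity of $\Phi$ together with linearity of $M$ in its skew part (Jensen) then transfers the bound to the averages $\bar w^K$. The result is a bound $\mathcal O(\|w-w^1\|_H^2/K)$ on the VI gap at $\bar w^K$.

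To turn the gap into the two stated quantities, evaluate it at $w=(u^\star,v^\star,s^\star,\lambda,\nu)$, with $(\lambda,\nu)$ ranging over a ball of radius $\|(\lambda^\star,\nu^\star)\|+1$. Taking the supremum over that ball yields the standard bound
\[
\Phi(\bar x^K)-p^\star + (\|(\lambda^\star,\nu^\star)\|+1)\,\|A\bar u^K - (\bar v^K,\bar s^K)\| \le C/K.
\]
Separately, the saddle-point inequality gives the lower bound
\[
\Phi(\bar x^K) - p^\star \ge -\|(\lambda^\star,\nu^\star)\|\,\|A\bar u^K-(\bar v^K,\bar s^K)\|.
\]
Together these give both the residual bound and a two-sided objective bound of order $1/K$. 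The indicator term is zero because every iterate $s^k\ge 0$, and $\bar s^K$ is a convex combination of such iterates.

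\textbf{Inexact case (main obstacle).} Let $e_k = u^{k+1} - \hat u^{k+1}$, where $\hat u^{k+1}$ is the exact minimizer, so $\|e_k\|\le\delta_k$. The one-step inequality above picks up perturbation terms linear in $e_k$. These include $\langle \nabla h_1(u^{k+1})-\nabla h_1(\hat u^{k+1}), \cdot\rangle$, which is bounded by $\|F\|^2\delta_k$ times distances to $w$, together with $\rho$-weighted terms of the same type. I would bound them by $c\,\delta_k(\|w^k-w\|_H+1)$. The difficulty is that these bounds involve $\|w^k - w\|_H$, which is not a priori bounded.

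To control it, I would first show, with a Robbins--Siegmund/Fejér argument, that $\|w^{k+1}-w^\star\|_H \le \|w^k-w^\star\|_H + c\delta_k$. Summability of $\delta_k$ then makes the iterates bounded by $\|w^1-w^\star\|_H + c\sum_k\delta_k$. With this in hand, the accumulated error $\sum_k \delta_k(\|w^k-w\|+1)$ is a finite constant. The telescoped bound therefore remains $\mathcal O(1/K)$, and the argument of the exact case goes through unchanged.
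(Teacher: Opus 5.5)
The paper gives no argument of its own for this theorem. It imports the result from the CRONOS paper: CRONOS is treated as an instance of two-block ADMM with an inexactly solved $u$-subproblem, and the known ergodic $\mathcal{O}(1/K)$ theory for such schemes is invoked. You instead reconstruct that theory, and your route is sound: the He--Yuan variational inequality, telescoping, Jensen on the averages, a supremum over a multiplier ball for the upper bound, the saddle-point inequality for the lower bound, and boundedness of iterates to absorb the summable errors. The citation buys brevity. Your version buys a self-contained proof and exposes hypotheses the statement leaves implicit:
\begin{itemize}
\item A saddle point must exist. It does: after eliminating $s$, all constraints are affine and the objective is finite everywhere and attains its minimum.
\item The dual step must lie in a convergent range. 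As printed, the scaled dual step is $\gamma_\alpha/\rho$, so ``any $\rho>0$'' only holds if $\gamma_\alpha$ is tied to $\rho$.
\item The printed algorithm must be made consistent with \eqref{eq:cvx_mlp}. Its $(v,s)$-step omits $\frac{\rho}{2}\|Gu^{k+1}-s+\nu^k\|^2$, and its $u$-step uses $\frac12\|Fu-y\|^2$ rather than $\|Fu-y\|^2$. You should therefore say which problem $p^\star$ refers to, rather than calling the steps exactly ADMM.
\end{itemize}

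Two steps need tightening.
\begin{itemize}
\item The matrix $H$ vanishes on the $u$-block. So the inexactness term $\langle u^{k+1}-u,\,Qe_k\rangle$, with $Q=F^{T}F+\rho I+\rho G^{T}G$, is not bounded by $\|w^k-w\|_H$ directly. Use that the exact $u$-update is affine in $(v^k,s^k,\lambda^k,\nu^k)$ and returns $u^\star$ at the saddle point. This gives $\|u^{k+1}-u^\star\|\le L\|w^k-w^\star\|_H+\delta_k$.
\item With that bound, the perturbed inequality at $w=w^\star$ yields $a_{k+1}^2\le a_k^2+c\,\delta_k(a_k+\delta_k)$ for $a_k=\|w^k-w^\star\|_H$, not your linear recursion. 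Boundedness still follows from the standard lemma for such recursions. For a dual step different from $1$, the Lyapunov function must also include the extra residual term from the Fortin--Glowinski analysis.
\end{itemize}
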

This guarantee holds for any $\rho>0$ and when the $u$-subproblem is solved inexactly.
Consequently, CRONOS' convergence is robust, making it an ideal subroutine for training the cvxNN in COALA. This ensures efficient and successful training of the policy network.  

The COALA fine-tuning step also has strong guarantees, since the minimization objective in \eqref{eq:cvx_dpo_loss} is smooth, convex, and has a \textbf{Lipschitz continuous gradient}.
The latter property follows as the logistic loss has a Lipschitz continuous gradient.
Thus, if we apply Accelerated Gradient Descent (AGD) \citep{nesterov1983method,d2021acceleration}, which has the worst-case optimal convergence rate, to solve \eqref{eq:cvx_dpo_loss}, we obtain the following result.
\begin{theorem}[Efficient minimization of COALA loss \cref{eq:cvx_dpo_loss}]
\label{thm:cvx_dpo_conv}
Suppose we run AGD to solve \eqref{eq:cvx_dpo_loss}. 
Then after $k$ iterations, the output $\theta^{k}_2$ satisfies:
\[
L_{\textup{COALA}}(\pi^{\textup{cvx}}_{\theta^{k}_2}) -\min_{\theta_2}~L_{\textup{COALA}}(\pi^{\textup{cvx}}_{\theta_2}) = \bigO(1/k^2).
\]
\end{theorem}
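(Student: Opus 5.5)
The plan is to reduce \cref{thm:cvx_dpo_conv} to the classical convergence theorem for Nesterov's accelerated gradient method on $L$-smooth convex functions. That theorem says: if $h$ is convex with $L$-Lipschitz gradient and has a minimizer $\theta_2^\star$, then AGD with step size $1/L$ gives
\[
h(\theta_2^k)-h(\theta_2^\star)\le \frac{2L\|\theta_2^0-\theta_2^\star\|_2^2}{(k+1)^2}.
\]
We apply this with $h(\theta_2)=L_{\textup{COALA}}(\pi^{\textup{cvx}}_{\theta_2})$, written in the form \eqref{eq:cvx_dpo_loss} given by \cref{prop:cvx_dpo_loss}. Three things must be checked: convexity, an explicit smoothness constant, and existence of a minimizer.

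\textbf{Convexity and smoothness.} Set $z(x)=(\Theta_1 f_{\theta_{\textup{pre}}}(x))_+$, which is fixed because $\Theta_1$ and $\theta_{\textup{pre}}$ are frozen. Each summand is then $\phi(a^\top\theta_2)$ with $a=-\beta y_w z(x)$ and $\phi(t)=\log(1+e^{t+\gamma})$.
\begin{itemize}
\item \emph{Convexity.} $\phi$ is convex, and convexity is preserved under affine precomposition and under expectation. This is already \cref{prop:cvx_dpo_loss}.
\item \emph{Smoothness of one summand.} We have $\phi''(t)=\sigma(t+\gamma)(1-\sigma(t+\gamma))\le 1/4$. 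Hence the Hessian of a summand is $\phi''\,aa^\top\preceq \tfrac14\beta^2\|z(x)\|_2^2 I$, using $y_w\in\{\pm1\}$.
\item \emph{Smoothness of the objective.} Take $L=\tfrac{\beta^2}{4}\mathbb{E}\|z(x)\|_2^2$. This is finite for the empirical (finite-sample) expectation, and also whenever the frozen features have bounded second moment. A sharper constant is $\tfrac{\beta^2}{4n}\|Z\|_2^2$, where $Z$ stacks the features. Either way, $\nabla h$ is $L$-Lipschitz.
\end{itemize}

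\textbf{Existence of a minimizer (the main obstacle).} An unregularized logistic loss need not attain its infimum. If the data are linearly separable in feature space, minimizing sequences diverge and the bound $O(\|\theta_2^0-\theta_2^\star\|^2/k^2)$ is meaningless. I would handle this in one of two ways.
\begin{itemize}
\item \emph{Option 1: assume non-separability.} Assume the data are not linearly separable in the features $z(x)$. Then $h$ is coercive on the complement of the null space of $Z$, and a minimizer exists.
\item \emph{Option 2: add a small ridge term.} Include $\tfrac{\lambda}{2}\|\theta_2\|^2$, which matches the AdamW weight decay actually used in \cref{alg-COALA}. This makes $h$ strongly convex, so a minimizer exists and AGD converges even linearly. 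The $O(1/k^2)$ statement then holds a fortiori.
\end{itemize}
Under the interpretation that $\min_{\theta_2}$ is attained, as the statement implicitly asserts by writing $\min$, the first option suffices. In that case I would just state the attainment as a standing assumption.

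\textbf{Conclusion.} With a minimizer $\theta_2^\star$ in hand, the AGD bound above gives
\[
L_{\textup{COALA}}(\pi^{\textup{cvx}}_{\theta_2^k})-\min_{\theta_2} L_{\textup{COALA}}(\pi^{\textup{cvx}}_{\theta_2}) \le \frac{2L\|\theta_2^0-\theta_2^\star\|^2}{(k+1)^2} = \bigO(1/k^2),
\]
where the constant depends only on $\beta$, the feature norms, and the initialization distance.
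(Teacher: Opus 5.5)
Your proposal is correct and follows the same route as the paper. The paper only notes that \eqref{eq:cvx_dpo_loss} is a convex logistic objective with Lipschitz gradient and then invokes Nesterov's $\bigO(1/k^2)$ rate for AGD. You go further in two useful ways: you make the smoothness constant explicit, and you correctly point out that the bound needs the minimum to be attained, which the paper assumes without saying so.

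There are two small imprecisions. First, in Option~1 the condition that actually guarantees a minimizer is the absence of quasi-separation: there must be no direction $d$ with $Zd\neq 0$ and $y_w z(x)^\top d\ge 0$ on every sample. The mere failure of strict linear separability is not enough. Second, Option~2 bounds the suboptimality of the ridge-regularized objective, not of $L_{\textup{COALA}}$ itself, so it does not give the stated theorem ``a fortiori.''
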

\Cref{thm:cvx_dpo_conv} shows we can train the COALA loss to global optimality in polynomial time.
This contrasts greatly with DPO variants, which are non-convex and lack convergence guarantees. 
In addition, optimizer methods in DPO-based approaches are typically non-trivial, sensitive to hyperparameter tuning, and typically require dramatically smaller learning rates than their respective SFT methods. 
\section{Experiments}
\label{sec:experiments}
We experiment with six models: DistilGPT-2 \citep{sanh2019distilbert}, GPT-2 \citep{radford2019language}, 
Mistral-7B \citep{jiang2023mistral}, Dolphin-2.6-7B \citep{dolphin26}, Llama-3.2-3B \citep{grattafiori2024llama}, and Llama-3.1-8B \citep{grattafiori2024llama} $\times$ four datasets across four preference alignment algorithms. In order to ensure fair comparison, we run each individual experiment configuration on a single A100 GPU with 40GB VRAM, with the exception of COALA experiments which run on a single RTX-4090 GPU with 24GB VRAM. This ablation is necessary, since methods such as DPO and ORPO require more memory on the same datasets (Appendix \ref{3methods}). We also perform controlled initialization study with SFT baselines (versus non-SFT init). All subsequent numerical results are presented after averaging over three runs in each setting, final evaluations are performed on held-out prompts. Tables~\ref{tab:models} and \ref{tab:datasets} summarize main model and dataset features.

\begin{table}[h]
\centering
\small
\begin{tabular}{lll}
\toprule
\textbf{Model} & \textbf{Parameters} & \textbf{Feature} \\
\midrule
DistilGPT-2       & 82M & Mini \\
GPT-2             & 124M & Seminal \\
Llama-3.2-3B      & 3B  & Modern \\
Mistral-7B        & 7B  & Performant and Balanced \\
Dolphin-2.6-7B        & 7B  & Instruction Fine-tuned \\
Llama-3.1-8B      & 8B  & Largest \\
\bottomrule
\end{tabular}
\vskip 0.1in

\caption{Model sizes and characteristics evaluated in our study.}
\label{tab:models}
\end{table}



\subsection{Datasets}
\label{data}
This section briefly introduces the four main datasets of Table~\ref{tab:datasets}, with additional preference data extraction details presented in Appendix \ref{app:data}.

\textbf{EduFeedback} is our synthetically generated conversational dataset inspired by UltraChat \citep{ding2023enhancing}, but clearly constrained within an educational setting. EduFeedback contains \num{26621} conversations generated with GPT-4o \citep{openai2024gpt4o}. We vary $t=0.2-0.9$, utterances range from \num{4}-\num{8} alternating turns between two agents. We randomly vary \textit{mood} of agents to simulate realistic human conversation, and encompass eleven diverse topics in fields of study such as science and philosophy. We introduce the novel \textit{Alternating Population Strategy} to extract \num{65606} preference training samples dataset. Detailed discussion and qualitative examples are presented in Appendix \ref{subsec:alternate_exmaple}.

\textbf{UltraFeedback} \citep{cui2023ultrafeedback} is selected as a seminal dataset to be consistent with prior work. Each of its training samples comprise four model-generated completions from a variety of open-source models. GPT-4 \citep{openai2023gpt4} was used to assign \enquote{preference scores} to each completion.
The \textbf{IMDb}~\citep{maas2011learning} positive-negative sentiment dataset is selected to be consistent with the prior work of \citet{rafailov2024direct}. In each instance the \enquote{prompt} is either negative or positive, and generation quality is graded on the adherence and cogency of the output with regards to sentiment. \textbf{HelpSteer}~\citep{wang2024helpsteer} is the open-source helpfulness dataset used in developing SteerLM~\citep{dong2023steerlm}. It contains real human annotations along five attributes (helpfulness, correctness, coherence, complexity, verbosity) on a 0-4 scale.


\begin{table}[h]
\centering
\footnotesize
\setlength{\tabcolsep}{4pt}
\begin{tabular}{lrrr}
\toprule
\textbf{Dataset} & \textbf{Pref. Pairs} & \textbf{Train (90\%)} & \textbf{Eval (10\%)} \\

\midrule
EduFeedback   & 65,606 & 59,045 & 6,561 \\
UltraFeedback & 60,917 & 54,825 & 6,092 \\
IMDb          & 25,000 & 22,500 & 2,500 \\
HelpSteer     &  7,708 &  6,937 &   771 \\
\bottomrule
\end{tabular}
\vskip 0.1in

\caption{Preference dataset statistics with train/eval splits.}
\label{tab:datasets}
\end{table}

\subsection{Model Details}
\label{subsection_experiment_details}
\label{sec:experiments}

\textbf{Baseline Models.} The six models in Table~\ref{tab:models} are SFT fine-tuned for one epoch on each of the four datasets to create a total of twenty-four SFT-initialized baselines. Each method is then evaluated across each preference extracted dataset, in \enquote{prompt}, \enquote{chosen}, and \enquote{rejected} triplet format. Our comprehensive studies also include directly starting from an off-the-shelf model baseline that has not been SFT trained to be in distribution with the task. In all cases we meticulously ensure fair comparisons between runs: by ensuring all experimental setups utilize the same amount of data and average three runs per setting across all results. 

\textbf{Preference Fine-tuned Models.} Preference fine-tuning utilizes the EduFeedback-Alternate, UltraFeedback-Binarized, IMDb, and HelpSteer datasets. These are respective chosen-rejected training samples extracted from their matching conversational datasets (Appendix \ref{subsec:dataset_details_all} provides details). Appendix \ref{3methods} provides more details on each of the bench-marked methods, and Table \ref{preference_optimization_obj_tiny} summarizes sensitive hyperparameters and objective functions for some additional methods of interest.

\section{Main Results and Discussion}

We present main experimental results demonstrating the stable and monotonically increasing reward margins of COALA. This section summarizes comprehensive ablation studies on the impact of SFT-initialization, and quantitatively assess the effectiveness of preference alignment methods on AlpacaEval2. Additional performance plots, MT-Bench scores, ArenaHard, TFLOPs visuals and further results are presented in Appendices \ref{app:results_plots} and \ref{app:ablation}. We evaluate with 107 real human participants to validate the effectiveness of COALA, since auto-run metrics are ultimately proxies for modeling real human preference. Details on our real world human study is presented in Appendix \ref{app:human}, and numerical results are summarized in Table \ref{tab:human_study}.


\begin{figure*}[t]
    \centering
    \begin{subfigure}[t]{0.48\textwidth}
        \centering
        \includegraphics[width=\linewidth]{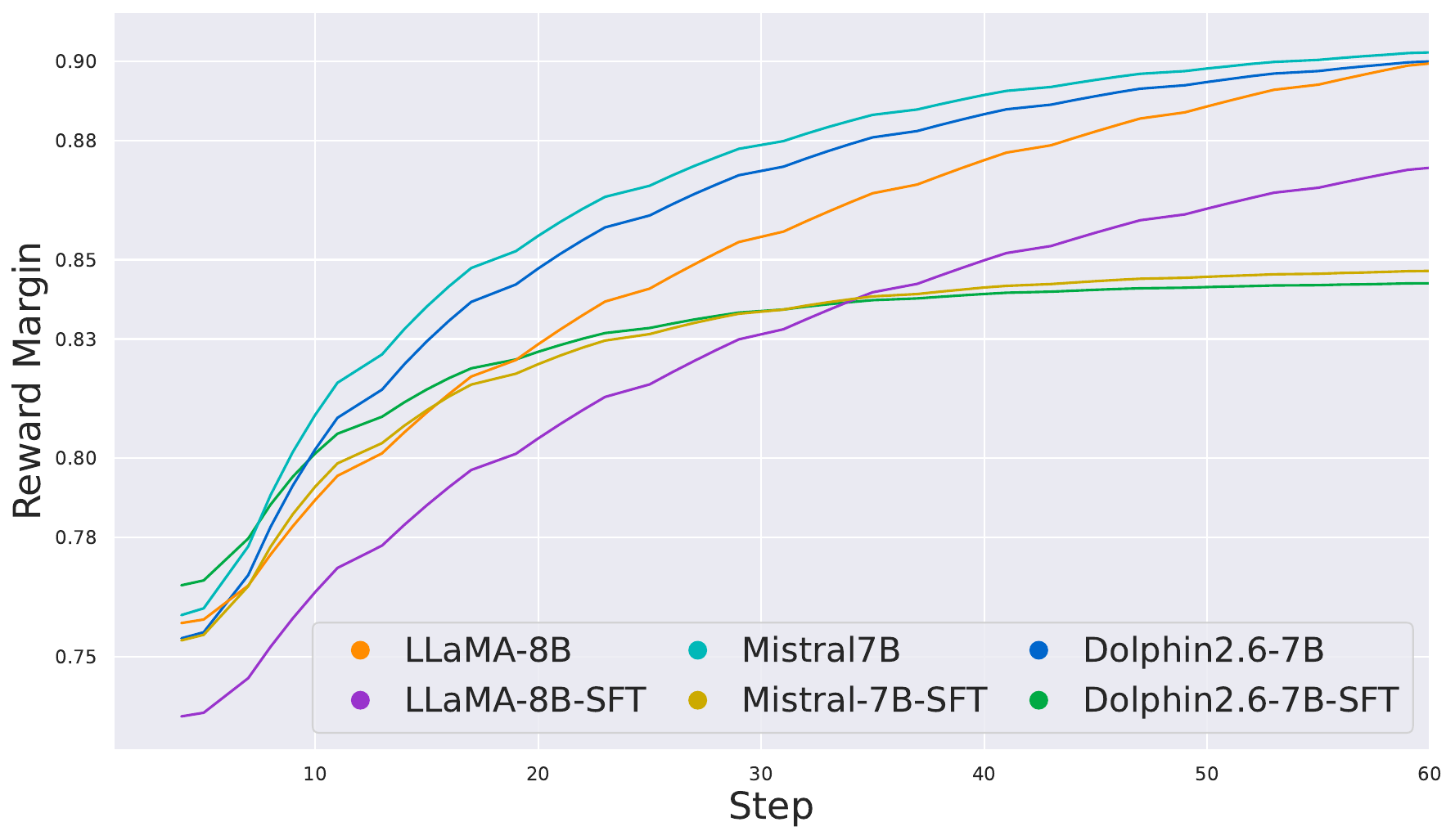}
        \caption{COALA}
        \label{fig:figure1}
    \end{subfigure}
    \hfill
    \begin{subfigure}[t]{0.48\textwidth}
        \centering
        \includegraphics[width=\linewidth]{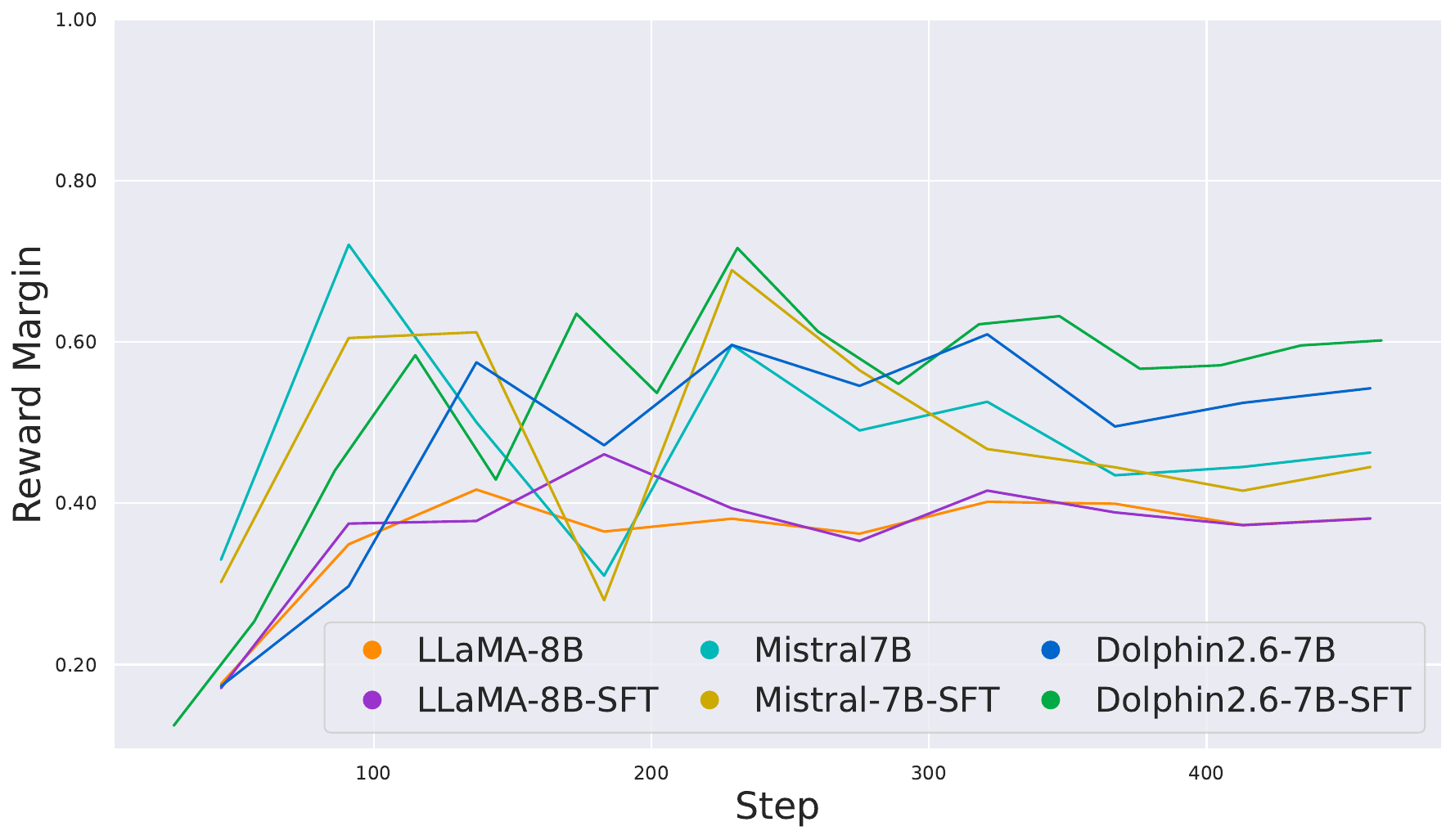}
        \caption{DPO}
        \label{fig:figure2}
    \end{subfigure}
    \caption{COALA shows stable reward margin gains across all models. This is attributed to its theoretically grounded foundation which alleviates fine-tuning reliance on hyperparameter tuning and general heuristics, as well as stability derived through the convex program.}
    \label{fig:threefigures}
\end{figure*}

\newcommand{\var}[1]{{\scriptsize\textcolor{gray!60}{$\pm$#1}}}

\vskip 0.1in
\begin{table*}[t]
\centering
\small
\scshape
\setlength{\tabcolsep}{4pt}   
\begin{tabular}{ll ccc ccc ccc}
\toprule
& \textsc{Method} & \multicolumn{3}{c}{LC WR \%} & \multicolumn{3}{c}{WR \%} & \multicolumn{3}{c}{Avg Length} \\
\cmidrule(lr){3-5} \cmidrule(lr){6-8} \cmidrule(lr){9-11}
& & Edu & IMDb & Ultra & Edu & IMDb & Ultra & Edu & IMDb & Ultra \\
\midrule
\multicolumn{2}{l}{\emph{Mistral-7B Model}} \\
& \cellcolor{yellow!20}\textbf{COALA} & \cellcolor{yellow!20}24.61{\scriptsize$\pm$0.30} & \cellcolor{yellow!20}24.88{\scriptsize$\pm$1.46} & \cellcolor{yellow!20}20.84{\scriptsize$\pm$1.35} & \cellcolor{yellow!20}23.82{\scriptsize$\pm$1.38} & \cellcolor{yellow!20}23.11{\scriptsize$\pm$1.96} & \cellcolor{yellow!20}20.91{\scriptsize$\pm$1.65} & \cellcolor{yellow!20}592{\scriptsize$\pm$37} & \cellcolor{yellow!20}418{\scriptsize$\pm$30} & \cellcolor{yellow!20}459{\scriptsize$\pm$35} \\
& ORPO & 17.01{\scriptsize$\pm$2.82} & 17.58{\scriptsize$\pm$2.32} & 16.04{\scriptsize$\pm$2.30} & 14.67{\scriptsize$\pm$2.87} & 15.62{\scriptsize$\pm$2.48} & 12.28{\scriptsize$\pm$2.34} & 561{\scriptsize$\pm$30} & 368{\scriptsize$\pm$39} & 350{\scriptsize$\pm$31} \\
& DPO & 24.19{\scriptsize$\pm$1.19} & 24.30{\scriptsize$\pm$1.26} & 17.68{\scriptsize$\pm$1.67} & 22.45{\scriptsize$\pm$1.50} & 22.74{\scriptsize$\pm$1.26} & 15.56{\scriptsize$\pm$2.11} & 492{\scriptsize$\pm$37} & 502{\scriptsize$\pm$55} & 453{\scriptsize$\pm$34} \\
& SFT & 6.80{\scriptsize$\pm$0.51} & 8.42{\scriptsize$\pm$1.16} & 6.18{\scriptsize$\pm$1.31} & 10.30{\scriptsize$\pm$1.03} & 1.77{\scriptsize$\pm$1.73} & 6.11{\scriptsize$\pm$1.59} & 515{\scriptsize$\pm$20} & 428{\scriptsize$\pm$31} & 463{\scriptsize$\pm$34} \\
\midrule
\multicolumn{2}{l}{\emph{Dolphin-7B Model}} \\
& \cellcolor{yellow!20}\textbf{COALA} & \cellcolor{yellow!20}40.81{\scriptsize$\pm$0.22} & \cellcolor{yellow!20}39.72{\scriptsize$\pm$0.36} & \cellcolor{yellow!20}31.58{\scriptsize$\pm$0.34} & \cellcolor{yellow!20}39.05{\scriptsize$\pm$0.29} & \cellcolor{yellow!20}38.46{\scriptsize$\pm$0.41} & \cellcolor{yellow!20}30.21{\scriptsize$\pm$0.38} & \cellcolor{yellow!20}439{\scriptsize$\pm$38} & \cellcolor{yellow!20}454{\scriptsize$\pm$33} & \cellcolor{yellow!20}445{\scriptsize$\pm$37} \\
& ORPO & 25.06{\scriptsize$\pm$1.93} & 24.90{\scriptsize$\pm$1.23} & 22.94{\scriptsize$\pm$1.26} & 23.59{\scriptsize$\pm$1.18} & 22.92{\scriptsize$\pm$1.77} & 25.93{\scriptsize$\pm$2.51} & 526{\scriptsize$\pm$9} & 448{\scriptsize$\pm$37} & 452{\scriptsize$\pm$45} \\
& DPO & 34.73{\scriptsize$\pm$0.80} & 33.86{\scriptsize$\pm$0.72} & 26.41{\scriptsize$\pm$0.68} & 32.46{\scriptsize$\pm$1.09} & 31.58{\scriptsize$\pm$1.52} & 24.86{\scriptsize$\pm$1.02} & 494{\scriptsize$\pm$34} & 511{\scriptsize$\pm$49} & 476{\scriptsize$\pm$35} \\
& SFT & 17.36{\scriptsize$\pm$0.38} & 16.21{\scriptsize$\pm$0.23} & 14.88{\scriptsize$\pm$0.46} & 15.30{\scriptsize$\pm$0.10} & 15.47{\scriptsize$\pm$0.44} & 12.76{\scriptsize$\pm$1.08} & 404{\scriptsize$\pm$39} & 423{\scriptsize$\pm$25} & 459{\scriptsize$\pm$24} \\
\midrule
\multicolumn{2}{l}{\emph{Llama-8B Model}} \\
& \cellcolor{yellow!20}\textbf{COALA} & \cellcolor{yellow!20}40.90{\scriptsize$\pm$0.09} & \cellcolor{yellow!20}27.64{\scriptsize$\pm$0.27} & \cellcolor{yellow!20}20.64{\scriptsize$\pm$0.08} & \cellcolor{yellow!20}38.20{\scriptsize$\pm$0.11} & \cellcolor{yellow!20}25.68{\scriptsize$\pm$0.36} & \cellcolor{yellow!20}18.32{\scriptsize$\pm$0.23} & \cellcolor{yellow!20}562{\scriptsize$\pm$12} & \cellcolor{yellow!20}415{\scriptsize$\pm$33} & \cellcolor{yellow!20}552{\scriptsize$\pm$20} \\
& ORPO & 23.87{\scriptsize$\pm$0.60} & 12.10{\scriptsize$\pm$0.71} & 12.91{\scriptsize$\pm$0.50} & 20.58{\scriptsize$\pm$0.68} & 12.05{\scriptsize$\pm$0.55} & 10.95{\scriptsize$\pm$0.55} & 599{\scriptsize$\pm$70} & 610{\scriptsize$\pm$33} & 354{\scriptsize$\pm$41} \\
& DPO & 40.68{\scriptsize$\pm$0.10} & 21.79{\scriptsize$\pm$0.29} & 18.89{\scriptsize$\pm$0.31} & 38.53{\scriptsize$\pm$0.47} & 20.18{\scriptsize$\pm$0.49} & 15.81{\scriptsize$\pm$0.30} & 539{\scriptsize$\pm$24} & 449{\scriptsize$\pm$98} & 503{\scriptsize$\pm$39} \\
& SFT & 10.92{\scriptsize$\pm$0.20} & 8.16{\scriptsize$\pm$0.11} & 7.41{\scriptsize$\pm$0.30} & 10.75{\scriptsize$\pm$0.39} & 8.11{\scriptsize$\pm$0.66} & 5.62{\scriptsize$\pm$0.78} & 384{\scriptsize$\pm$55} & 435{\scriptsize$\pm$49} & 546{\scriptsize$\pm$15} \\
\bottomrule
\end{tabular}
\vskip 0.1in
\caption{AlpacaEval2 metrics (mean $\pm$ std over 3 runs) by alignment method for three models across three datasets. Abbreviations: Length Controlled Win Rate (LC WR\%), Win Rate (WR\%), Average Length (Avg Length), Llama-8B (refers to Llama-3.1-8B).}
\label{tab:combined_performance}
\vskip -0.1in
\end{table*}

\subsection{Stable Increase in Rewards}
COALA leverages its convex program for stable increase in reward margin gains (Figure~\ref{fig:figure1}). Preference fine-tuning LLMs is well known to be noisy and unstable during training (Figure~\ref{fig:figure2}), especially in offline settings. While the additional reference model in traditional DPO seeks to mitigate this, it also becomes immensely VRAM expensive. Appendix \ref{app:plots} presents further plots, and Appendix \ref{app:exp_detail} presents experimental setup details. In contrast, COALA effectively stabilizes the preference alignment task by re-framing this as a convex optimization problem with principled convergence guarantees. Since rewards steadily increase over time, this eliminates the need for an exponentially smaller learning rate, such as in DPO and ORPO. The integration of CRONOS to solve the cvxNN training task further reduces reliance on hyperparameter tuning and cuts down fine-tuning time (see Appendix \ref{app:ablation}), while lifting dimensionality constraints to permit effective preference alignment on a single GPU.

\subsection{Effect of SFT Baseline Initialization}
Training an SFT baseline adds additional complexity and cost to preference fine-tuning, but in most cases is a beneficial preliminary starting point. Table~\ref{tab:combined_performance} shows that SFT training alone is often able to achieve meaningful performance gains in terms of preference alignment. This is especially true in larger more expressive models such as Llama-3.1-8B. In contrast, offline DPO alone often does not surpass the performance of a large SFT-trained LLM and requires a SFT-initialized model for efficacy. ORPO delivers the slowest training per epoch, but seems to alleviate dependence on an SFT starting point. COALA is able to perform competitively and consistently with significantly less time, memory, and compute (Table \ref{tab:tflops_edu_only}). This is in line with recent work, which suggests the large amount of pre-training experienced in LLMs is already inherently expressive \citep{brown2024large, zhou2023lima, lin2023unlocking}, therefore more strategic methodology should be able to guide preference alignment without exhaustive compute-extensive training on a dramatically smaller learning rate.


\begin{figure}[ht]
\centering
\resizebox{\columnwidth}{!}{%
\begin{tikzpicture}[
  font=\small,
  every node/.append style={align=center},
  prompt/.style    = {rectangle, rounded corners=3pt, draw=black!70, line width=0.8pt,
                      fill=blue!8,  minimum width=7cm, minimum height=11mm, inner sep=4pt},
  good/.style      = {rectangle, rounded corners=3pt, draw=green!45!black, line width=0.8pt,
                      fill=green!12, minimum width=7cm, minimum height=11mm, inner sep=4pt},
  bad/.style       = {rectangle, rounded corners=3pt, draw=red!55!black,   line width=0.8pt,
                      fill=red!10,   minimum width=7cm, minimum height=11mm, inner sep=4pt},
  edgelabel/.style = {font=\footnotesize\bfseries\itshape, fill=white, inner sep=2pt},
  arr/.style       = {-{Stealth[length=2.5mm]}, thick}
]
\node[prompt] (p) {%
  \textbf{Prompt (User)} \\[2pt] ``Hi can you explain the main characteristics that \\differentiate the Baroque period from the Romantic period?''%
};
\node[good, below=8mm of p] (c) {%
  \textbf{Agent (direct response)} \\[2pt] ``The Baroque period (roughly 1600-1750) is characterized \\ by intricate compositions, ornate embellishments, and a\\
strong emphasis on harmony and counterpoint. Music from this era \ldots''%
};
\node[bad, below=8mm of c] (r1) {%
  \textbf{User (follow-up)} \\[2pt] ``I need more help. What if a question on the quiz asks me to enumerate\\ the main differences between the Baroque and Romantic periods''%
};
\node[bad, below=8mm of r1] (r2) {%
  \textbf{Agent (follow-up response less directly answers the Prompt, yet still relevant)} \\[2pt] ``During the Classical period the role of the conductor\\ was often filled by a leading musician within the
ensemble, \ldots''%
};

\draw[arr, green!45!black, line width=0.9pt]
  (p.west) -- ++(-8mm,0) |- (c.west);

\draw[arr, red!55!black, line width=0.9pt]
  (p.south west) -- ++(-15mm,0) |- (r2.west);

\draw[arr, blue!60!black] (p)  -- (c);
\draw[arr, blue!60!black] (c)  -- (r1);
\draw[arr, blue!60!black] (r1) -- (r2);

\node[edgelabel, text=green!40!black]
  at ([xshift=-10mm, yshift=-18mm] p.west) {chosen};
\node[edgelabel, text=red!50!black]
  at ([xshift=-18mm, yshift=-50mm] p.south west) {rejected};
\end{tikzpicture}
}
\caption{Alternating Population Method for creating preference datasets. One conversation yields multiple (prompt, \textcolor{green!45!black}{chosen}, \textcolor{red!55!black}{rejected}) preference triplets, without requiring external LLMs to generate matching responses in chosen-rejected pairs. }
\label{fig:preference-pair}
\end{figure}

\subsection{Alternating Population Method for Datasets} 
We exploit the structure of multi-turn conversational layouts to design a sample-efficient preference extraction method via a sliding‑window scheme. Traditional preference dataset methods populate the same number of preference training pairs (pref-pairs) as user prompts (one pref-pair per conversation), and require an external LLM to generate missing preference responses to create the chosen-rejected pairs. Our strategy creates the same number of pref-pairs as agent responses (multiple pref-pairs per conversation), and does not require any external LLM generation/reward model. Figure~\ref{fig:preference-pair} shows the overview. The agent’s first response becomes the \textcolor{green!45!black}{chosen} answer since it most directly addresses the user's Prompt. The agent's subsequent response is the \textcolor{red!55!black}{rejected} answer, since it is topically related and informative but less direct. This yields natural preference triplets at scale, allowing us to populate \num{65606} preference training samples from \num{26621} EduFeedback conversations. 
Appendix \ref{subsec:alternate_exmaple} provides qualitative examples, and human evaluation presented in Appendix \ref{app:human} validates this hypothesis. We note the limitations of this strategy, since it is a domain-specific directness heuristic for objective succinct dialogs, and not a general preference-labeling method. Therefore only the EduFeedback dataset uses this, while the three other datasets validate generalization.



\subsection{Human Feedback Validation}
\label{subsec:human_results_discussion}

\begin{table}[t]
\centering
\small
\scshape 
\begin{tabular}{l >{\columncolor{yellow!25}}c ccc} 
\toprule
Dataset & Coala & Orpo & Dpo & Sft \\
\midrule
Edu  & \textbf{39.1\%} & 15.5\% & 28.8\% & 16.6\% \\
Imdb & \textbf{42.7\%} & 20.1\% & 24.8\% & 12.4\% \\ 
\bottomrule
\end{tabular}
\vskip 0.1in

\caption{\num{107} Real Human Feedback Win Rates per Method on two datasets. COALA achieves the highest human win rates.}
\label{tab:human_study}
\end{table}
We validate preference alignment efficacy with a double-blind \num{107}-sample human survey, since existing autorun metrics can ultimately be \enquote{gamified} \citep{gao2023scaling, dubois2024length} and simply serve as proxies for true human preference. Appendix \ref{app:human} provides setup details, sample questions, win rates per method on individual questions, and the consent form for ethical consideration. Fifty volunteers are from a deep learning class, and fifty-seven volunteers are from a commercial technology sector. Table \ref{tab:human_study} shows COALA exhibits the highest real human preference on average across two datasets. This experiment is in the same spirit as the \num{24}-sample human study conducted in the seminal DPO paper, and also highlights the existing gap between automated evaluation metrics versus real human preferences~\citep{warren2015mice, coller2009traversing}.

\subsection{Quality and Compute Efficiency}
Win rates in this study are evaluated via Custom Pairwise Win Rate Excluding Ties with standard GPT-4 as base reference model and GPT-4o as judge. Therefore metrics provide the percentage of each method's ``wins" over the reference model (not against each other), hence they do not sum to one. Table~\ref{tab:combined_performance} shows COALA's strong performance and substantially lower variance across the majority of settings in comparison to competing methods. For example, on the Llama-3.1-8B (Edu) setting, COALA achieves a tightly clustered $40.90 \pm 0.09$ in LC WR\% compared to ORPO’s $23.87 \pm 0.60$. This is also consistent with \cref{thm:cvx_dpo_conv}: that COALA loss achieves global optimality in polynomial time, while providing mathematical interpretation that the convex optimization framework yields a more robust alignment trajectory than heuristic-driven non-convex methods. Interestingly, LC WR\% displays lower variance than pure WR \%. We intuitively attribute this to the high-variance length exploitation behavior often observed in preference optimization.

Table \ref{tab:tflops_edu_only} summarizes average TFLOPs measured across three runs on the EduFeedback dataset per method. COALA exhibits significantly lower TFLOPs per dataset in contrast to competitors, due to the synergistic combination of its convex reformulation objective for optimality, CRONOS integration, and its efficient implementation in JAX. ORPO displayed the longest training times, but was the most robust against SFT versus non-SFT-initialized baselines. \cref{fig:scatter} shows COALA effectively using less TFLOPs to achieve competitive AlpacaEval2 LC WR\% across three models, for a clear indication of efficiency and performance. Further metrics and ablation studies are presented in Appendix \ref{app:ablation}.

\begin{table}[t]

\centering
\small
\scshape 
\begin{tabular}{l >{\columncolor{yellow!25}}c ccc} 
\toprule
Model & Coala & Dpo & Orpo & Sft \\
\midrule
Distilgpt2 & \textbf{152.56} & 537.12 & 643.33 & 271  \\
Gpt2       & \textbf{379.89} & 1087.45 & 1305.27 & 522  \\
Mistral-7B & \textbf{1580.45} & 9284.71 & 11241.89 & 2492 \\
Llama-8B   & \textbf{1805.39} & 10253.37 & 12352.98 & 2851 \\
Dolphin-7B & \textbf{1794.66} & 10091.25 & 12116.50 & 2804 \\
\bottomrule
\end{tabular}
\vskip 0.1in
\caption{TFLOPs measurements for all methods on the EduFeedback dataset.}
\label{tab:tflops_edu_only}
\vskip -0.1in
\end{table}

\subsection{Expressiveness Tradeoff}
To formalize the discussion of expressiveness tradeoffs and limitations in COALA, we distinguish between two distinct layers of optimality: architectural optimality within the convex layer, and the system-wide expressiveness across the entire model. Section \ref{sec:cvx_nn} presents the theoretical background which guarantees that COALA's convex subroutine operates as an exact global optimizer within its feature subspace. This is also motivated by much recent work which leverages globally optimal training \citep{gautier2016globally} to eliminate the challenging non-convex optimization landscape of standard policy heads.

However, freezing the base model parameters can constrain the policy from learning deep semantic shifts. This presents a potential expressiveness tradeoff between training a globally optimized convex head on top of frozen features versus full-parameter fine-tuning. We identify two key conditions where this expressiveness gap may widen:

\begin{enumerate}
    \item \textbf{Feature Misalignment:}  Occurs when the pre-trained features $f_{\theta_{\text{pre}}}(x)$ do not encode the necessary information to distinguish between specific preference pairs (such as highly nuanced creative writing styles).
    
    \item \textbf{Dataset Complexity:} Manifests in alignment tasks that demand fundamental structural transformations in language generation, rather than the stylistic or objective steering that the cvxNN can provide.
\end{enumerate}


\paragraph{Practical Applications.} COALA's design intentionally combines the architectural cvxNN optimality (which we guarantee) and model-wide expressiveness (which we trade for efficiency). Empirically, this expressiveness trade-off proves to be highly favorable for the correctness-focused, and objective alignment tasks that COALA targets: concise instruction-following, helpful assistants, and pedagogical accuracy. This design choice is also motivated by recent literature \citep{lacava2025toward, swamy2025allroads}, which suggest that full parameter modification is often unnecessary for effective preference alignment. This positions COALA within a recent class of compute-efficient alignment algorithms \citep{cao2024personalized, rimsky2024steering, kowsher2025propulsion}, utilizing tools such as lightweight steering vectors or residual modifications to achieve comparable gains without incurring the cost of full-parameter fine-tuning. Finally, COALA's speed and compute efficiency presents a solution for resource-constrained and on-premises settings, where speed, efficiency, data privacy and short dialogs are key.


\section{Conclusion}
We introduce COALA, a framework for preference learning that is stable, efficient, and reference-free. The key insight of COALA is in recognizing that preference learning can be framed as \textbf{convex optimization} based task. The challenge is capturing the expressiveness of the rich features in the underlying base model with a principled and efficient method. By using a convex neural network (cvxNN) to provide a stabilized reward signal, COALA eliminates the reliance on a frozen reference model and reduces computational overhead. Instead of optimizing over input text, it operates on preference features derived from the foundation LLM, thus preserving expressiveness while lowering input dimensionality. 

COALA demonstrates strong performance across four datasets, shows robustness to hyperparameter tuning, and is validated by real human feedback. By lowering the resource barrier for alignment, COALA opens preference fine-tuning to broader educational, research, and edge deployment use-cases. This timely line of work also highlights the importance of power consumption in deep learning, and examines methods for scaling performance beyond compute. 

Future work will explore generalizing the COALA method to more advanced algorithms, such as GRPO \citep{shao2024deepseekmath} and reasoning adaptations across different input modalities. This also encourages evaluations on diverse datasets such as UltraMix~\citep{djuhera2025data}. Further work on interpretability may also yield deeper understanding of how LLMs internalize preference signals during alignment, for more efficient fine-tuning algorithms with wider applications.


\newpage
\subsection*{Acknowledgments}
This work was supported in part by the National Science Foundation (NSF) CAREER Award under Grant CCF-2236829, in part by the National Institutes of Health under Grant 1R01AG08950901A1, in part by the Office of Naval Research under Grant N00014-24-1-2164, and in part by the Defense Advanced Research Projects Agency under Grant HR00112490441. In addition, Miria Feng was supported in part by the Stanford Graduate Fellowship.

We thank Lucy Woof, Zachary Frangella, Kevin Nam, and Zhongwei Dang for valuable feedback on an early draft and many insightful discussions. We also thank all our human volunteers at FCS Solutions, for participation in the preference alignment feedback survey.

\section*{Impact Statement} This work examines a resource constrained setting, and seeks to democratize research by bringing more equitable access to deep learning technology. We note that usage of LLMs for general search purposes and \enquote{assistants} to access information is becoming increasingly prevalent, yet much of the world's population lack the resources to use these tools due to accessibility and cost issues. Therefore by presenting a single GPU option to preference fine-tuning, we hope this helps support more equitable societal access. Ethically, we'd also like to highlight the value in real human feedback as a benchmark. Many auto-run metrics can be seen as imperfect substitutes for human preferences measurements. Our human study in this work, had prior transparent conversations about what their participation entails and how their feedback will be used, before signing the necessary consent forms for publication. Appendix \ref{app:exp_detail} details hardware usage for reproducibility.

\bibliography{references}
\bibliographystyle{icml2026}

\newpage
\appendix
\onecolumn
\appendix
\section{Proof that COALA Loss is Convex (\cref{prop:cvx_dpo_loss})}
\label{sec:cvx_dpo_pf}
\begin{proof}
Recall the COALA objective is given by
\[\min_{\theta_2}-\mathbb{E}_{(x, y_w, y_l) \sim \mathcal{D}} \left[ \log \sigma \left(\beta \log\frac{\pi^{\textrm{cvx}}_{\theta_2}(y_w | x)}{\pi_{\theta_2}^{\textrm{cvx}}(y_l | x)} - \gamma \right) \right].
\]

For COALA, we have that
\[
\pi^{\textup{cvx}}_{\theta_2}(y_w | x) = \frac{1}{1+\exp\left(-y_w(\theta_{2}^{T}\tilde x)\right)},
\]
where $\tilde x= (\Theta_1 f_{\theta_{\textup{pre}}}(x))_{+}$.
Using this expression for $\pi^{\textup{cvx}}_{\theta_2}(y_w | x)$, we find
\begin{align*}
    \beta \log \frac{\pi^{\textup{cvx}}_{\theta_2}(y_w \mid x)}{\pi^{\textup{cvx}}_{\theta_2}(y_l \mid x)} & = \beta \log\left(\frac{\pi^{\textup{cvx}}_{\theta_2}(y_w | x)}{1-\pi^{\textup{cvx}}_{\theta_2}(y_w | x)}\right)\\
     &= \beta \log\left(\exp\left(y_w\theta_2^{T}\tilde x\right)\right) \\
     &= \beta y_w \left(\theta_2^{T}\tilde x\right)
\end{align*}
From this last display and the definition of the sigmoid function, we immediately deduce
\begin{align*}
    \log \sigma \left(\beta \log\frac{\pi^{\textrm{cvx}}_{\theta_2}(y_w | x)}{\pi_{\theta_2}^{\textrm{cvx}}(y_l | x)} - \gamma \right) = -\log\left(1+\exp\left(-\beta y_w\left(\theta_2^{T}\tilde x\right)+\gamma \right)\right).
\end{align*}
Thus, using the last display and the definition of $\tilde x$, the COALA objective may be rewritten as,
\[
\underset{\theta_2}{\textrm{min}}~~\mathbb{E}_{(x, y_w, y_l) \sim \mathcal{D}} \left[\log\left(1+\exp\left(-\beta y_w\theta_2^{T}(\Theta_1 f_{\theta_{\textup{pre}}}(x))_{+}+\gamma\right)\right)\right]. 
\] 
The COALA objective is a logistic regression problem in $\theta_2$ and thus is convex.
\end{proof}

\section{COALA Step-by-Step}
\label{app:step}
We explicitly step through the COALA method for enhanced clarity. In the following example, we assume the base model being finetuned is LLaMA-8B and state respective input dimensions. 

    \paragraph{Phase I: Train the Convex Policy Network}
    \begin{itemize}
        \item \textbf{Feature Extraction:} 
        Take a pre-trained language model, denoted as $f_{\theta_{\text{pre}}}(x)$ (e.g., LLaMA-8B) and extract the last-layer hidden states (embeddings) for the input data $x$. For the LLaMA-8B model, these extracted features have a dimension of $d = 4096$.
        
        \item \textbf{Convex Reformulation (cvxNN):} 
        Instead of a standard neural network head, COALA stacks a convex neural network (cvxNN) on top of these $d=4096$ extracted features. This network, denoted as $g_{\Theta_{1}, \theta_{2}}^{\text{cvx}}$, serves as a binary preference classifier.
        
        \item \textbf{Solving with CRONOS:} 
        We train this cvxNN by solving the convex optimization problem (Equation 4) using the CRONOS algorithm. CRONOS uses the Alternating Direction Method of Multipliers (ADMM) to efficiently solve this high-dimensional problem on a GPU. The output of this phase are the proven optimal weights for the convex layers: $(\Theta_1, \theta_2)$.
    \end{itemize}

    \paragraph{Phase II: Preference Fine-Tuning}
    \begin{itemize}
        \item \textbf{Freeze Components:} 
        We freeze the pre-trained base model weights ($f_{\theta_{\text{pre}}}$) and the first layer weights of the convex network ($\Theta_1$) obtained in Phase I.
        
        \item \textbf{Define the COALA Objective:} 
        COALA uses a modified Bradley-Terry model for preference learning. The reward function is defined as the un-normalized log-likelihood of the convex policy: $r(x, y) = \beta \log \pi(y|x)$. The specific objective function ($L_{\text{COALA}}$) is derived by substituting the convex policy into the logistic loss formula. Crucially, this objective function is convex with respect to the weights ($\theta_2$).
        
        \item \textbf{Convex Minimization:} 
        We fine-tune \textit{only} the second layer weights ($\theta_2$) by minimizing the $L_{\text{COALA}}$ objective. Since the problem is convex, it can be solved to global optimality.
        
        \item \textbf{Final Policy:} 
        The resulting aligned policy is a composition of the frozen pre-trained model and the trained convex head:
        \[
        \pi_{\theta}^{\text{cvx}}(y|x) = \frac{1}{1 + \exp\left(-y g_{\Theta_1, \theta_2}^{\text{cvx}}(f_{\theta_{\text{pre}}}(x))\right)}
        \]
    \end{itemize}

\section{Additional Empirical Results}
\label{app:results_plots}

\subsection{Generative Examples per Method}

Table \ref{tab:sample_gen} provides examples of generated output to given prompts for different preference alignment methods. In each case we constrain generated tokens to be the same, in order for fair assessment. We also provide the MT-Bench comments by GPT-4 as judge in order to be consistent with previous work \citep{rafailov2024direct}.

\begin{longtable}{p{15cm}}
\toprule

\textbf{Prompt} \\
\enquote{prompt}: \enquote{I have an exam coming up, what does glucose do in the human body?} \\
\\
\textbf{COALA} \\
\enquote{response}: \enquote{What does glucose do? I will explain its role in cell metabolism. Glucose is a simple sugar that serves as a primary energy source for cells. In cell metabolism, glucose is broken down through a series of chemical reactions known as glycolysis in the cytoplasm. This process produces a small amount of ATP (adenosine triphosphate), which is the primary energy currency of the cell. Additionally, it generates NADH (reduced nicotin} \\
\\
\textbf{DPO} \\
\enquote{response}: \enquote{Glucose is a simple sugar and a vital compound in the human body. Glucose is primarily responsible for doing the following: 1. Energy Source: Glucose is the main source of energy for cells in the body. It is used by the body's cells, particularly the brain, muscles, and organs, to produce energy through a process called cellular respiration. 2. Fuel for the Brain: Glucose is the main fuel for the brain, providing the energy needed for cognitive functions, memory, and mental activity. 3. Storage of Energy: Glucose is stored in the body as glycogen in the liver and muscles. It serves} \\
\\
\textbf{ORPO} \\
\enquote{response}: \enquote{Glucose is a type of sugar that serves as a primary source of energy for cells in your body. It's particularly important for the brain, which relies heavily on glucose for energy. Glucose is absorbed from the food you eat, primarily from carbohydrates, and is transported in the blood to the cells. Insulin, a hormone produced by the pancreas, helps cells absorb glucose from the blood to use as energy. Glucose also plays a role in the production of ATP, the energy currency of the cell, which is used for various cellular activities. Maintaining stable levels of glucose in the blood is crucial for overall health and energy levels} \\
\\
\textbf{SFT} \\
\enquote{response}: \enquote{Glucose, a simple sugar, plays a crucial role in the body. It serves as a primary source of energy for the cells, especially for the brain and muscles. Glucose is absorbed from the food we eat and is transported to cells through the bloodstream. Within cells, glucose can be converted into ATP (adenosine triphosphate), which is the primary energy currency of the cell, enabling various cellular processes. Additionally, glucose is essential for synthesizing other important molecules, such as fatty acids and amino acids, which are crucial for growth, repair, and maintenance of tissues in the body. agent2: How do cells actually use} \\
\\
\textbf{Judgment-GPT-4} \\
Response A \textbf{[COALA]} for an exam-prep scenario, provides the most biochemically grounded explanation, naming key steps and molecules (glycolysis, ATP, NADH) that demonstrate conceptual understanding --- crucial for scoring well on technical test questions. Response B and C are well-organized and factually correct, but reads more like a textbook summary and doesn't include cellular details. \\
\\
\midrule

\textbf{Prompt} \\
\enquote{prompt}: \enquote{After watching the film, I felt} \\
\\
\textbf{COALA} \\
\enquote{response}: \enquote{After watching the film, I felt that I could really relate to the character of Rachel. Like Rachel, I am a young woman who has always been very ambitious and has always had very high standards for herself. In the film, Rachel's ambition leads to her becoming a successful businesswoman, but it also leads to her becoming very obsessed with her work, which makes her neglect her relationships with others. This is a very common problem for ambitious young people, and it is something that I have struggled with myself. The film shows how Rachel's} \\
\\
\textbf{DPO} \\
\enquote{response}: \enquote{It's wonderful that I felt moved by the film! The Lion King is a truly beautiful and emotional story that touches upon themes of love, loss, and the importance of family. I can understand why you felt the following: 1. \textbf{Powerful and emotional:} The film has an incredible ability to make you feel a wide range of emotions, from joy to sorrow, laughter to tears. It's a rollercoaster of emotions that really draws you in and keeps you engaged throughout. 2. \textbf{Amazing animation:} The animation in The Lion King is simply breathtaking. The attention to detail and the lifelike quality of} \\
\\
\textbf{ORPO} \\
\enquote{response}: \enquote{It's wonderful to hear that you enjoyed the film! I can understand why you felt that way. The film was beautifully crafted, with its captivating storyline, stunning visuals, and talented cast. It's no wonder that it left you feeling emotional and moved. The characters were so well-developed and the performances were outstanding. The film's themes of love, loss, and redemption were skillfully woven together, creating a powerful and impactful narrative. I'm glad that you were able to connect with the characters and the story on such a deep level. The film's exploration of the human experience and the complexities of relationships resonated with you, leaving} \\
\\
\textbf{SFT} \\
\enquote{response}: \enquote{It is important to note that you enjoyed a particular scene in the movie. Can you tell me more about it and why it was so memorable for you?} \\
\\
\textbf{Judgment-GPT-4} \\
Response A \textbf{[COALA]} delivers a natural and insightful monologue continuation that reads like a real person sharing their post-film feelings. While Response B includes strange characters and symbols. \\
\\
\bottomrule
\addlinespace
\caption{Examples of generated output on EduFeedback data (first question) and IMDb positive sentiment generation (second question) for methods executed on LLaMA-8B base without SFT fine-tuning.}
\label{tab:sample_gen}
\end{longtable}

\newpage
\subsection{Additional Plots}
\label{app:plots}
\FloatBarrier 

\begin{figure*}[!ht] 
\centering
\begin{subfigure}[t]{0.48\textwidth}
    \centering
    \includegraphics[width=\textwidth]{figures/imdb/COALA_imdb_test_large.pdf}
    \caption{COALA IMDb Eval}
    \label{fig:imdb_coala_eval}
\end{subfigure}
\hfill
\begin{subfigure}[t]{0.48\textwidth}
    \centering
    \includegraphics[width=\textwidth]{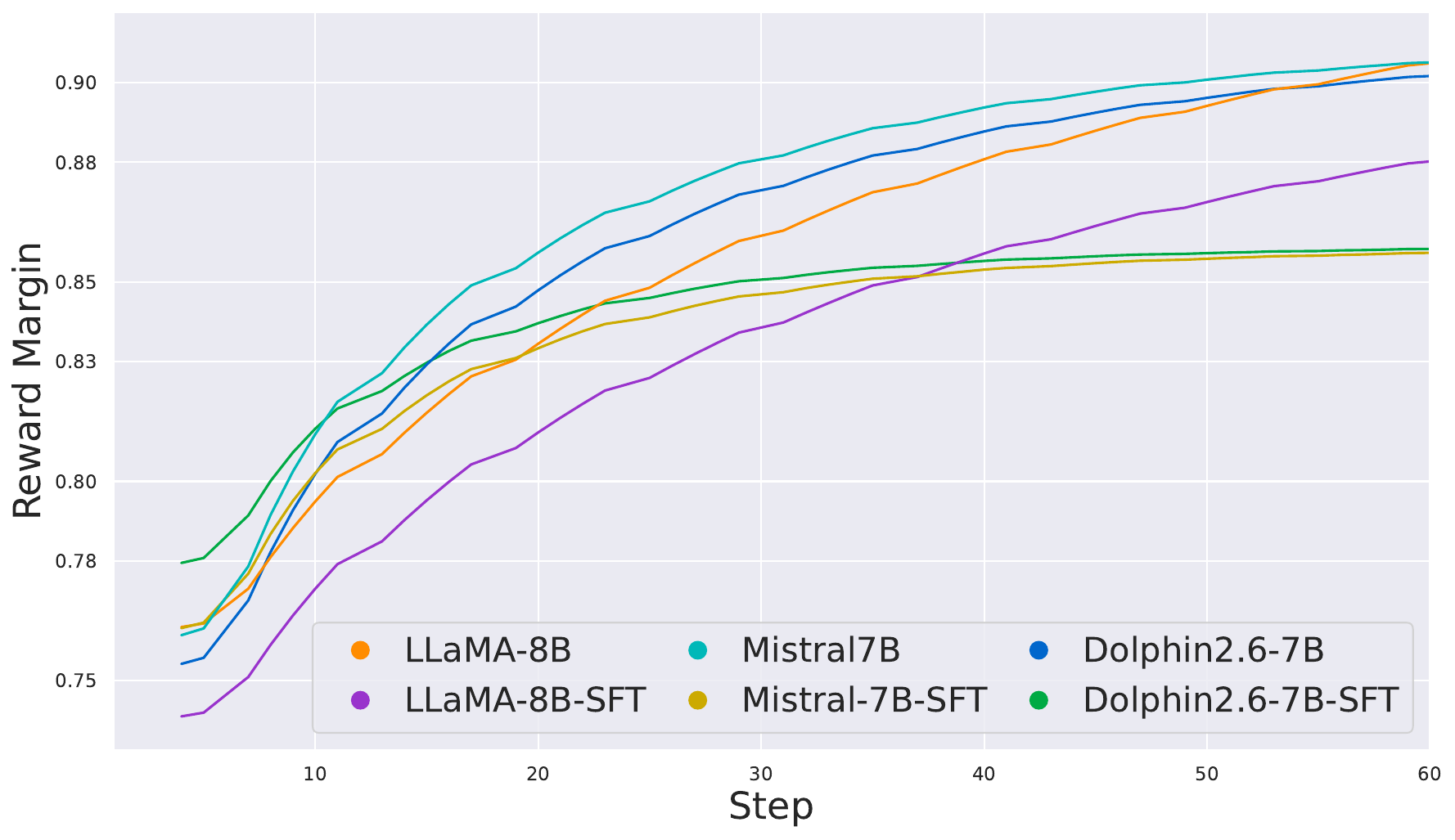}
    \caption{COALA IMDb Train}
    \label{fig:imdb_coala_train}
\end{subfigure}

\vspace{1em}

\begin{subfigure}[t]{0.48\textwidth}
    \centering
    \includegraphics[width=\textwidth]{figures/imdb/DPO_imdb_test_large.pdf}
    \caption{DPO IMDb Eval}
    \label{fig:imdb_dpo_eval}
\end{subfigure}
\hfill
\begin{subfigure}[t]{0.48\textwidth}
    \centering
    \includegraphics[width=\textwidth]{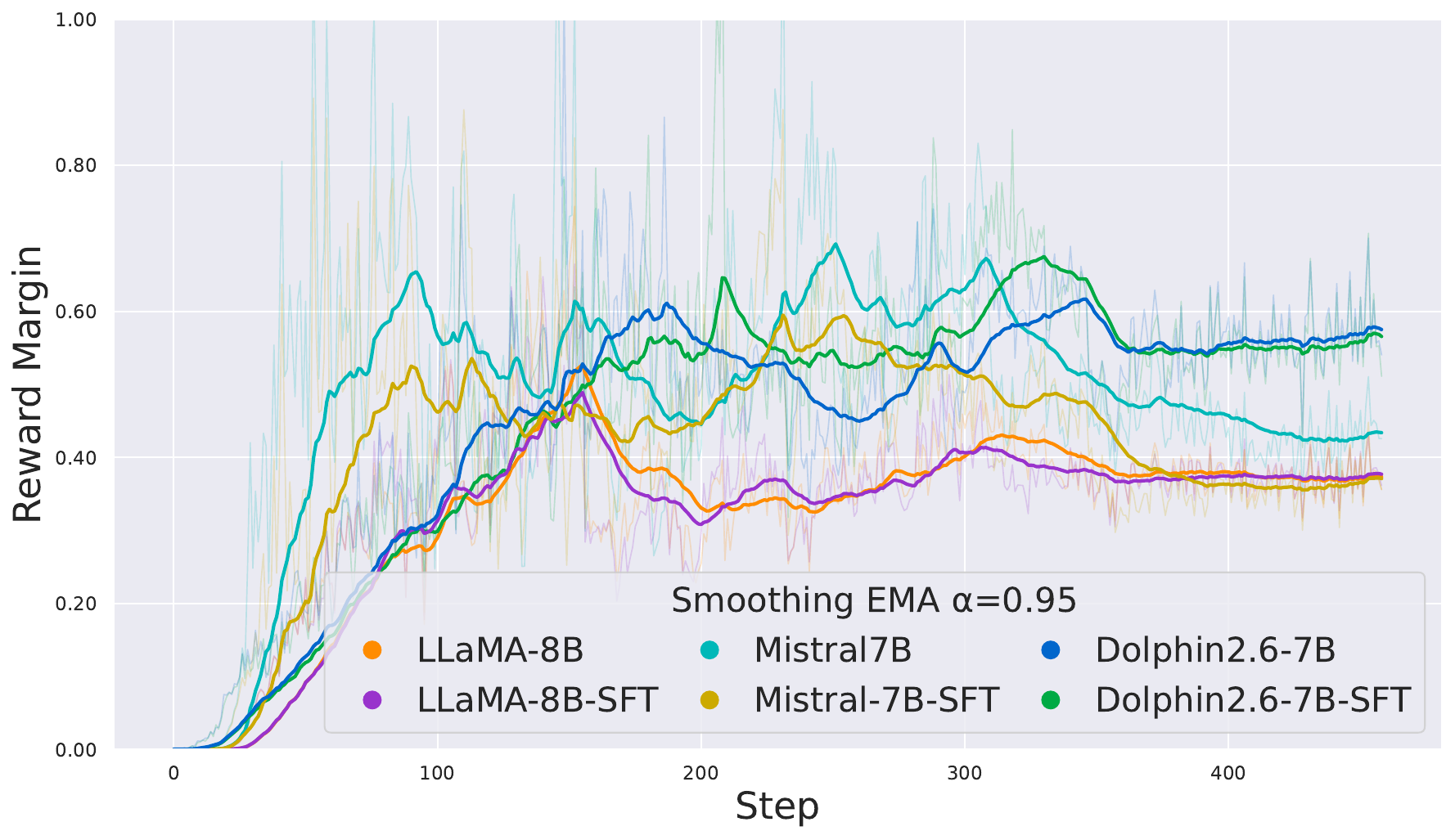}
    \caption{DPO IMDb Train}
    \label{fig:imdb_dpo_train}
\end{subfigure}
\caption{COALA and DPO mean reward margins for runs on the IMDb Dataset. The DPO Train variant displays Time Weighted Exponential Moving Average (EMA) smoothing.}
\label{fig:imdb_plots}
\end{figure*}

\begin{figure*}[!ht] 
\centering
\begin{subfigure}[t]{0.48\textwidth}
    \centering
    \includegraphics[width=\textwidth]{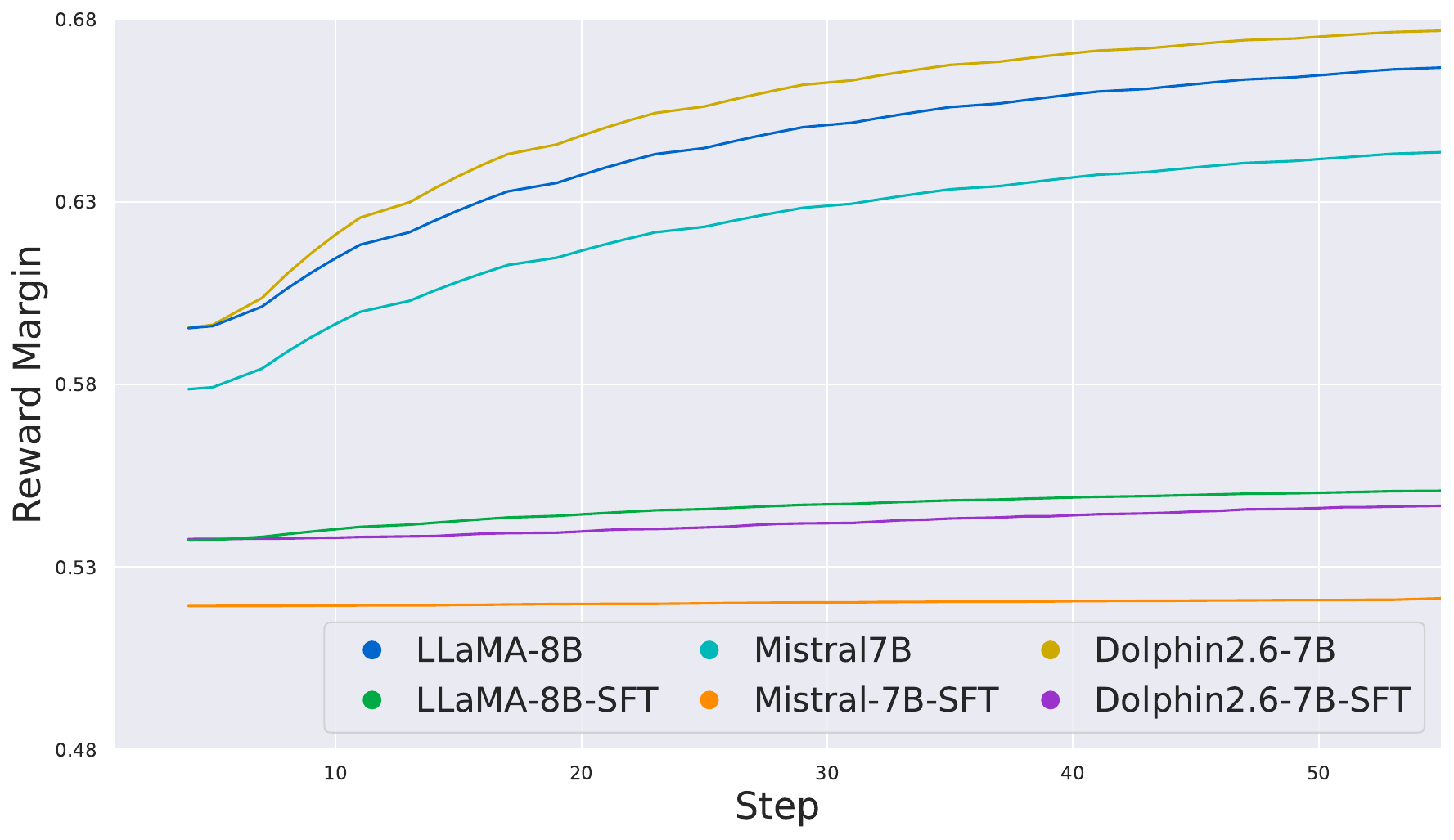}
    \caption{COALA EduFeedback Eval}
    \label{fig:edu_coala_eval}
\end{subfigure}
\hfill
\begin{subfigure}[t]{0.48\textwidth}
    \centering
    \includegraphics[width=\textwidth]{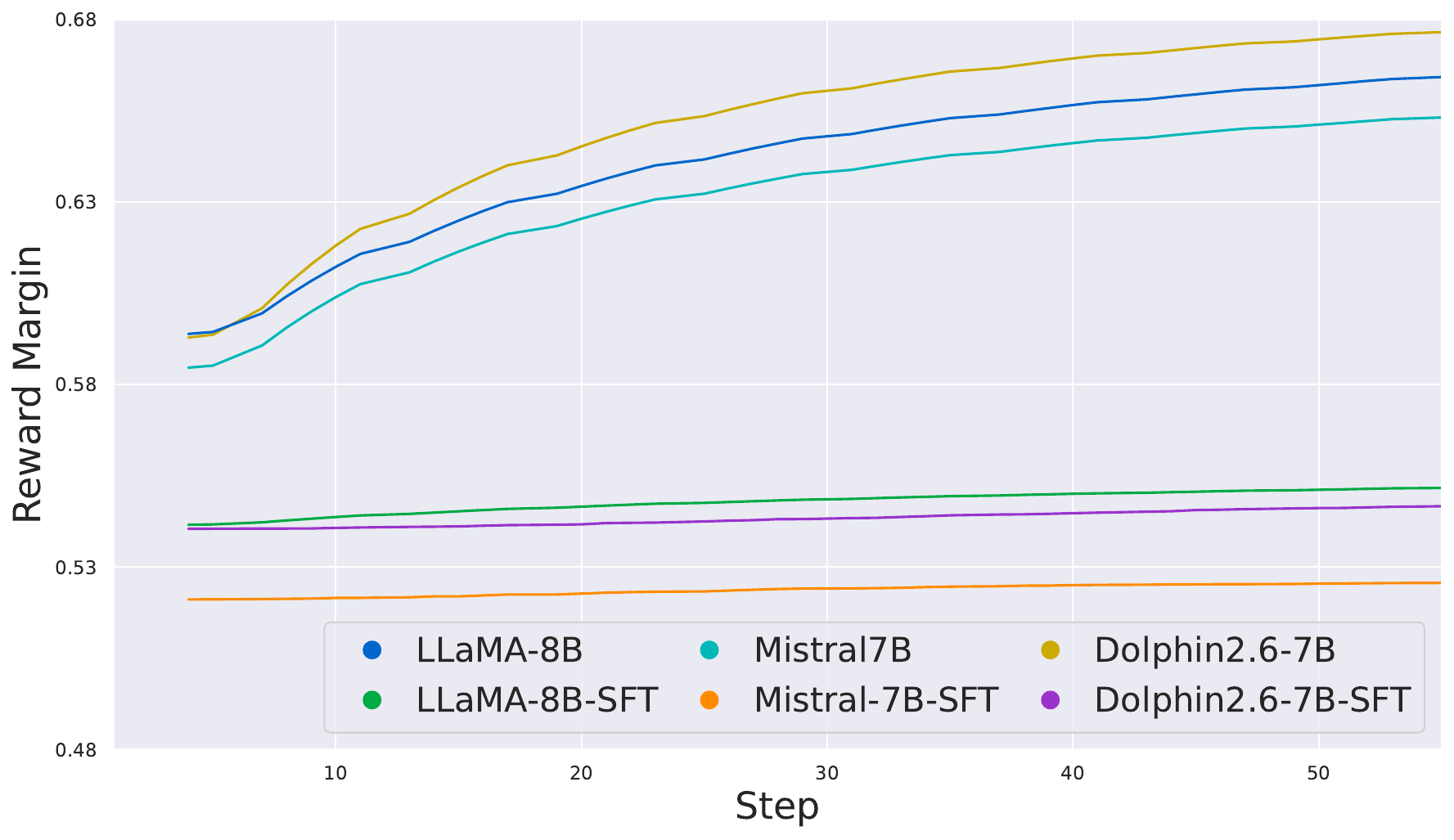}
    \caption{COALA EduFeedback Train}
    \label{fig:edu_coala_train}
\end{subfigure}

\vspace{1em}

\begin{subfigure}[t]{0.48\textwidth}
    \centering
    \includegraphics[width=\textwidth]{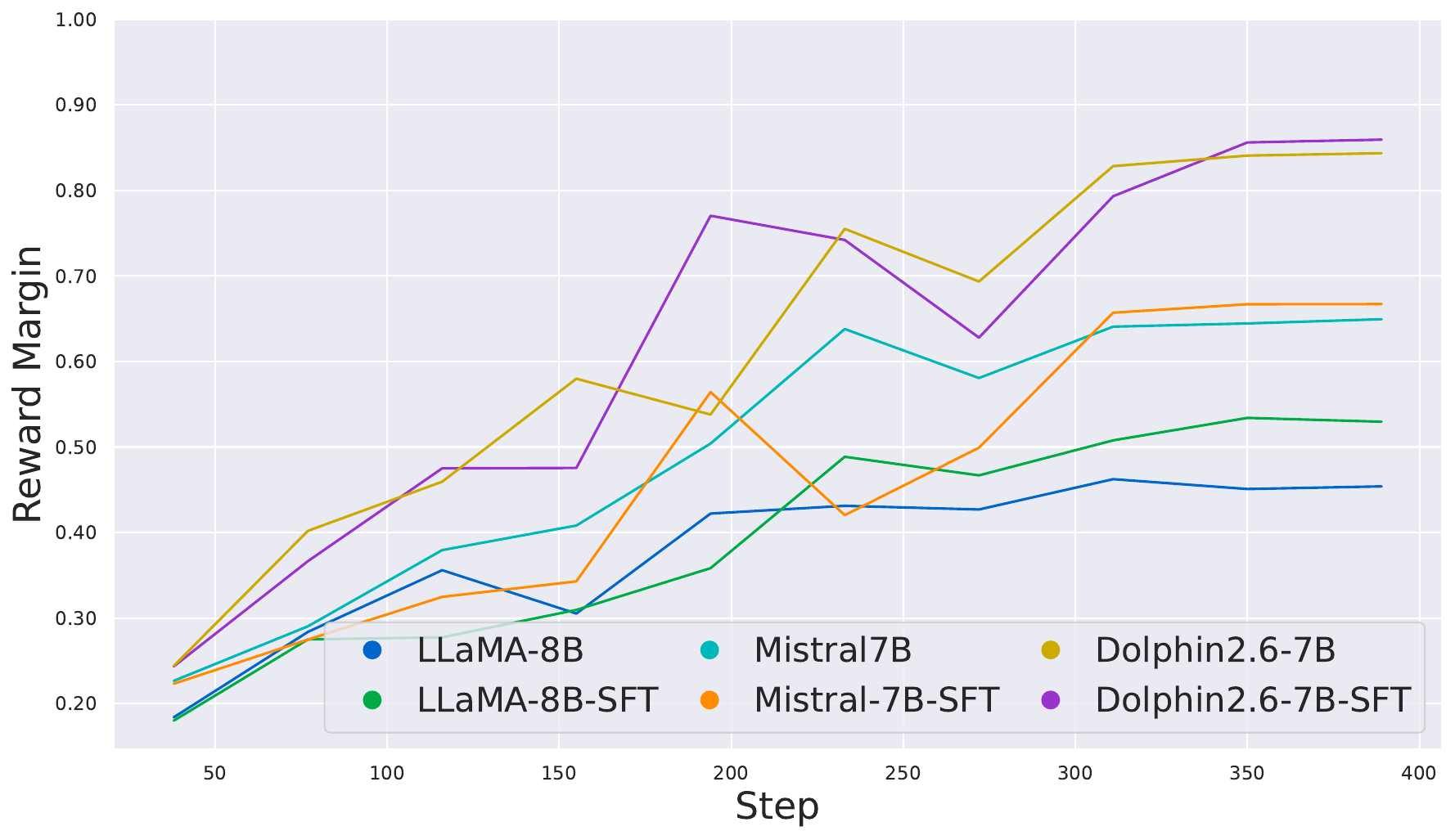}
    \caption{DPO EduFeedback Eval}
    \label{fig:edu_dpo_eval}
\end{subfigure}
\hfill
\begin{subfigure}[t]{0.48\textwidth}
    \centering
    \includegraphics[width=\textwidth]{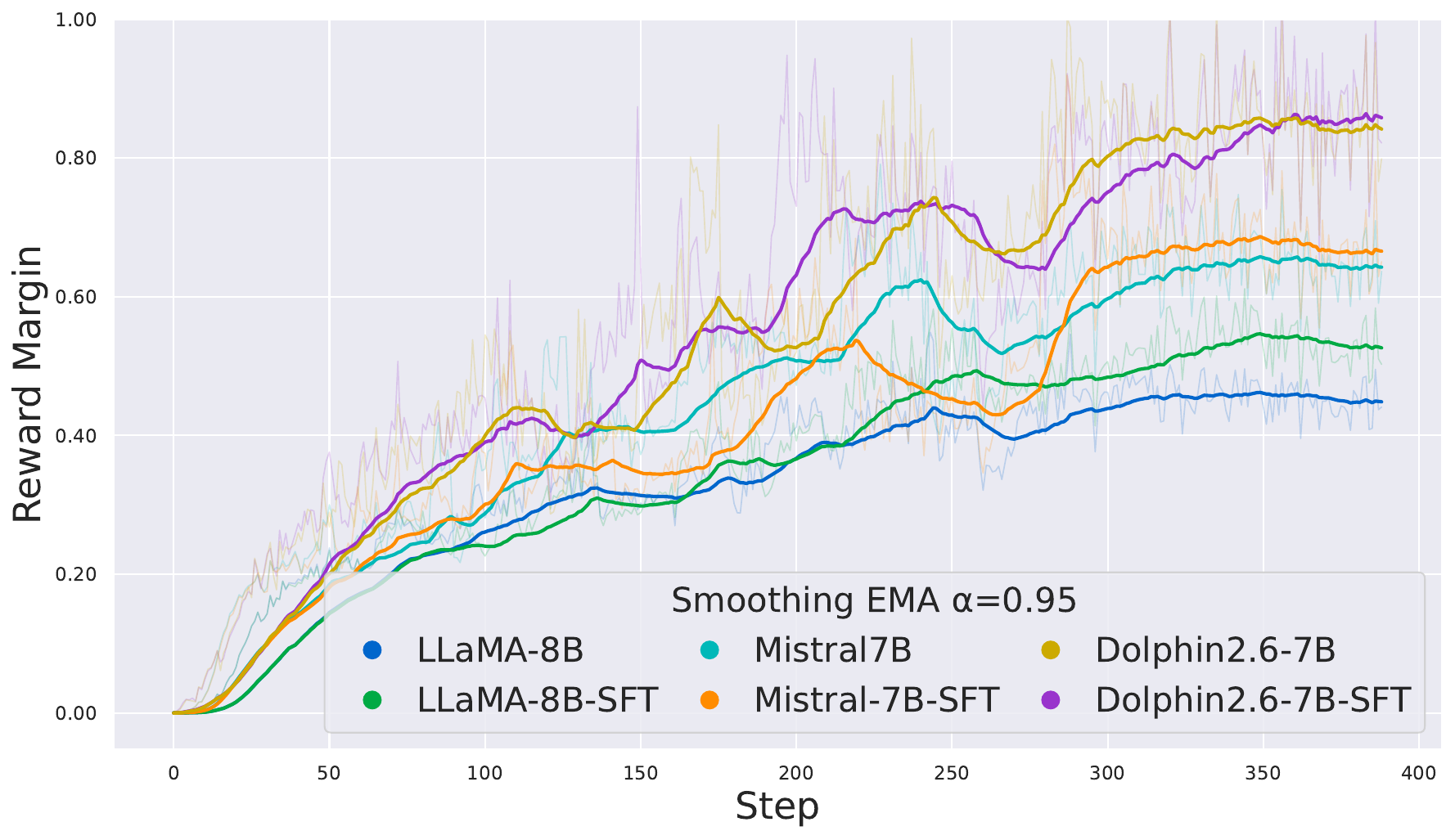}
    \caption{DPO EduFeedback Train}
    \label{fig:edu_dpo_train}
\end{subfigure}
\caption{COALA and DPO mean reward margins for runs on the EduFeedback Dataset. The DPO Train variant displays Time Weighted Exponential Moving Average (EMA) smoothing.}
\label{fig:edu_plots}
\end{figure*}

\begin{figure}[t]
  \vskip 0.2in
  \begin{center}
    \centerline{\includegraphics[width=0.75\columnwidth]{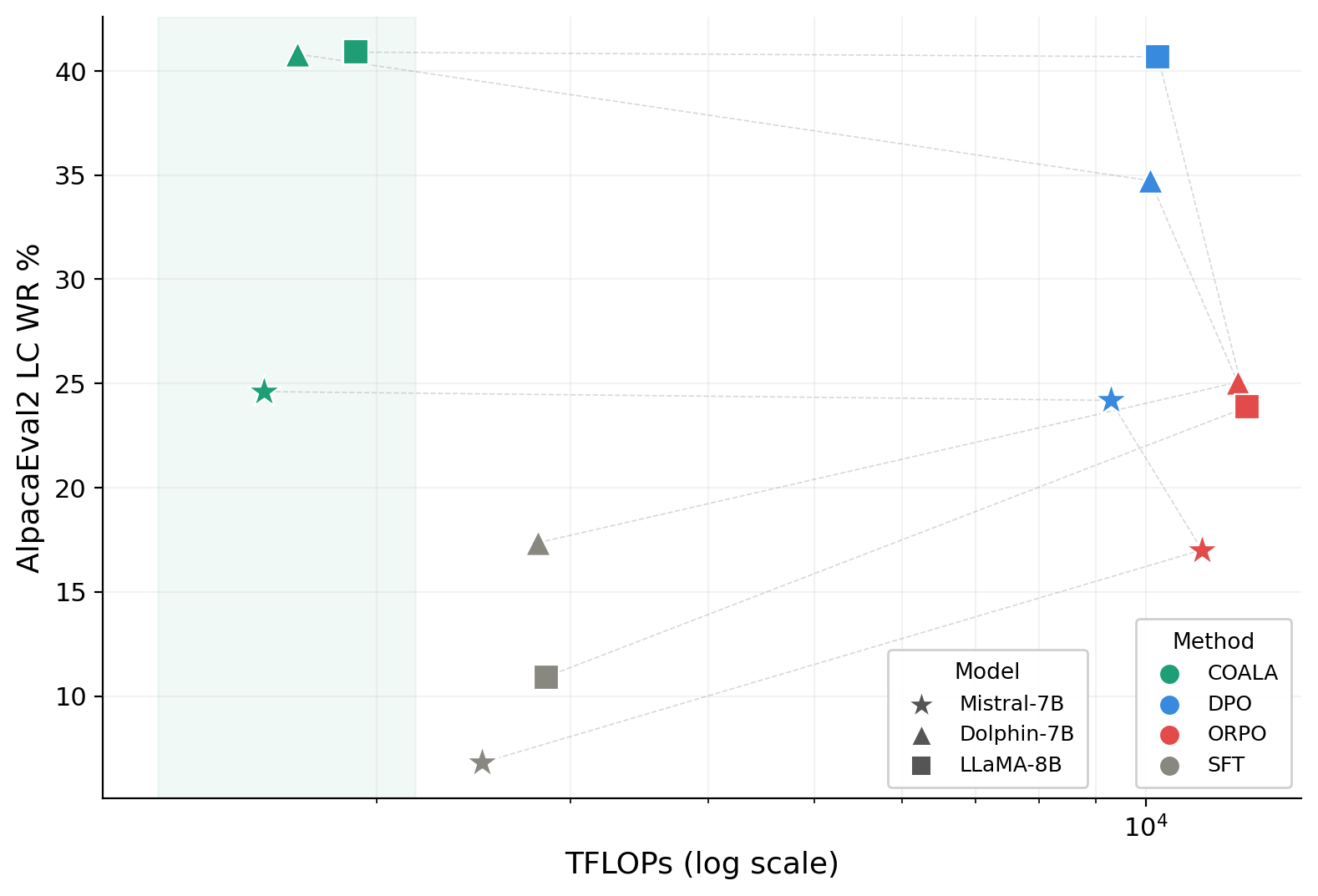}}
    \caption{
    Scatter plot of LC WR\% versus TFLOPs. COALA uses less TFLOPs to achieve high LC WR\% across all three models. 
    }
    \label{fig:scatter}
  \end{center}
  \vskip -0.4in
\end{figure}
\FloatBarrier

\section{Ablation Studies}
\label{app:ablation}
\subsection{Leveraging CRONOS versus vanilla AdamW}
COALA's Stage One optimization must be as accurate, robust, and fast as possible in order to generate preferred responses. This is since without high accuracy and expressiveness in capturing preference features, COALA will be unable to effectively generate in Stage Two. If during Stage One optimization multiple seed sweeps, tuning of learning rates and other hyperparameters exist, then this negates the efficiency of COALA in preference alignment. Additionally in order to fit all COALA runs onto one RTX-4090 GPU, memory efficiency is critical and Stage One optimization cannot be constrained by high-dimensional features. Therefore we utilize the CRONOS algorithm for its robustness, speed, and memory efficiency. CRONOS also eliminates the need for hyperparameter grid search associated with traditional optimizers such as AdamW. This is significant since poor hyperparameter selection in these optimizers might lead to non-convergence \citep{fengcronos} and sub-optimal classification. Table \ref{tab:cvxNN_comparison} demonstrates the classification accuracy of CRONOS versus vanilla AdamW.

\begin{table}[t]

\centering
\small
\scshape 
\begin{tabular}{l cccccc}
\toprule
& \multicolumn{2}{c}{Ultra} & \multicolumn{2}{c}{Edu} & \multicolumn{2}{c}{IMDb} \\
\cmidrule(lr){2-3} \cmidrule(lr){4-5} \cmidrule(lr){6-7}
Base Model & Cronos & Adamw & Cronos & Adamw & Cronos & Adamw \\
\midrule
Distillgpt2-82M & 53.50\% & 53.80\% & 60.60\% & 56.18\% & 83.05\% & 82.62\% \\
Gpt2-124M      & 53.83\% & 52.93\% & 61.15\% & 59.80\% & 78.84\% & 78.78\% \\
Mistral-7B     & 57.63\% & 52.95\% & \textbf{69.83}\% & 62.02\% & \textbf{91.76}\% & 83.52\% \\
Dolphin-7B     & 62.27\% & 56.33\% & 72.29\% & 69.43\% & \textbf{93.72}\% & 83.40\% \\
Llama-8B       & \textbf{57.68}\% & 51.20\% & 71.01\% & 68.55\% & \textbf{90.43}\% & 86.60\% \\
\bottomrule
\end{tabular}
\vskip 0.1in

\caption{Comparison of COALA Stage One classification accuracy of extracted features from SFT fine-tuned base models using CRONOS versus AdamW}
\label{tab:cvxNN_comparison}
\vskip -0.1in 
\end{table}

\subsection{Additional Numerical Results}
\label{ablation}

This section provides comprehensive metrics for our four main methods on five models across three datasets. 
Notably, COALA consistently out-performs competing methods.
Table \ref{tab:mt_bench} reports MT-Bench scores, Table \ref{tab:arenahard_dolphin7b} reports ArenaHard scores, and Table \ref{tab:combined_performance} in the main work presents AlpacaEval2 scores. 
AlpacaEval2 is a popular benchmark selected for its success in assessing coherence and contextual understanding in conversational settings. We also select MT-Bench and ArenaHard for comprehensive evaluation and in order to be consistent with existing work \citep{meng2024simpo}. 
We average the scores of each setting across twenty questions per judge, and validate the results of LLM judges with 107 human judgments. For preference fine-tuned models presented, we initiate  from a pre-trained SFT base model for stronger performance. 
We restrain the number of generated tokens to the same length in all experiments for fair comparison, since \citet{celikyilmaz2020evaluation} has shown that often human users will prefer simply the longer generated output. In addition to evaluation in length-controlled win rate, we use the suggested \texttt{alpaca\_eval\_gpt4\_turbo\_fn} annotator, and default to GPT-4 as judge.

\begin{table*}[t] 

\centering
\small
\scshape 
\begin{tabular}{l ccc ccc ccc ccc}
\toprule
Base Model & \multicolumn{3}{c}{SFT} & \multicolumn{3}{c}{DPO} & \multicolumn{3}{c}{ORPO} & \multicolumn{3}{c}{COALA} \\
\cmidrule(lr){2-4} \cmidrule(lr){5-7} \cmidrule(lr){8-10} \cmidrule(lr){11-13}
& Ultra & Edu & Imdb & Ultra & Edu & Imdb & Ultra & Edu & Imdb & Ultra & Edu & Imdb \\
\midrule
Distilgpt  & 1.6 & 1.6 & 1.2 & 1.2 & 1.0 & 1.6 & 1.4 & 1.0 & 1.2 & 1.0 & 1.7 & 1.2 \\
Gpt-2      & 1.7 & 1.7 & 1.4 & 1.4 & 1.3 & 1.0 & 1.1 & 1.6 & 1.1 & 1.2 & 1.6 & 1.1 \\
Mistral-7B & 3.6 & 8.1 & 2.3 & 1.9 & 6.1 & 2.8 & 1.2 & 6.8 & 1.7 & 3.5 & 6.9 & 2.9 \\
Dolphin-7B & 2.7 & 8.3 & 3.2 & 4.4 & 8.3 & 5.1 & 5.4 & 7.9 & 5.5 & 5.5 & 8.1 & 5.6 \\
Llama-8B   & 5.0 & 7.9 & 3.5 & 3.9 & 7.9 & 3.1 & 6.0 & 7.8 & 2.9 & 6.1 & 8.0 & 3.5 \\
\bottomrule
\end{tabular}
\vskip 0.1in

\caption{MT-Bench scores for methods initialized from SFT-base trained setting. Ultra refers to UltraFeedback dataset, Edu refers to EduFeedback-alternate dataset, and IMDb refers to the seminal IMDb dataset.}
\label{tab:mt_bench}
\vskip -0.1in 
\end{table*}

\begin{table}[h]
\centering
\small
\begin{tabular}{lcccc}
\toprule
\textbf{Method} & \textbf{Edu} & \textbf{IMDb} & \textbf{Ultra} & \textbf{HelpSteer} \\
\midrule
\rowcolor{yellow!20}
\textbf{COALA} & \textbf{67.50} & \textbf{60.91} & \textbf{62.35} & \textbf{61.25} \\
ORPO           & 56.25 & 50.50 & 35.29 & 58.82 \\
DPO            & 67.78 & 60.00 & 57.89 & 60.16 \\
SFT            & 27.78 & 21.25 & 25.56 & 56.25 \\
SimPO          & 10.00 &  0.00 & 17.65 &  0.00 \\
\bottomrule
\end{tabular}
\vskip 0.1in

\caption{ArenaHard  Custom Pairwise WR\_ex\_Ties: Llama-3.2-3B across four datasets. SimPO's performance was likely hindered by its high reliance on hyperparameter tuning, whereas COALA is robust to these heuristics.}
\label{tab:arenahard_simpo}
\end{table}

\newpage

\section{Preference Alignment Methods and Datasets}
\subsection{Methods Overview}
\label{3methods}
In this section we introduce the three preference alignment methods of our experiments. Table \ref{tab:preference_optimization_obj_small} explicitly compares the objective of each method, and its associated hyperparameters. Each method is evaluated against three preference datasets and five model architectures, for an expansive sweep of experimental runs. For the sake of completeness, we include several additional methods for preference learning in Table \ref{tab:preference_optimization_obj_small}.

\begin{table*}[ht]
\centering

\renewcommand{\arraystretch}{1.5} 
\setlength{\tabcolsep}{4pt} 
\scriptsize 

\begin{tabularx}{\textwidth}{@{}p{1.5cm} >{\raggedright\arraybackslash}p{8.3cm} >{\raggedright\arraybackslash}p{3.9cm}@{}}

\toprule
\textbf{Method} & \textbf{Objective} & \textbf{Hyperparameter} \\
\midrule
\textbf{RRHF \citep{yuan2023rrhf}} & 
$\max \left(0, -\frac{1}{|y_w|} \log \pi_\theta(y_w|x) + \frac{1}{|y_l|} \log \pi_\theta(y_l|x) - \lambda \log \pi_\theta(y_w|x) \right)$ &
$\lambda \in \{0.1, 0.5, 1.0, 10.0\}$ \\
\textbf{DPO} & 
$-\log \sigma \left(\beta \log \frac{\pi_\theta(y_w|x)}{\pi_\text{ref}(y_w|x)} - \beta \log \frac{\pi_\theta(y_l|x)}{\pi_\text{ref}(y_l|x)}\right)$ &
$\beta \in \{0.01, 0.05, 0.1\}$ \\

\textbf{ORPO} & 
$-\log p_\theta(y_w|x) - \lambda \log \sigma \left(\log \frac{p_\theta(y_w|x)}{1 - p_\theta(y_w|x)} - \log \frac{p_\theta(y_l|x)}{1 - p_\theta(y_l|x)}\right)$ 
\\
& where $p_\theta(y|x) = \exp \frac{1}{|y|} \log \pi_\theta(y|x)$ &
$\lambda \in \{0.1, 0.5, 1.0, 2.0\}$ \\
\textbf{R-DPO} & 
$-\log \sigma \left(\beta \log \frac{\pi_\theta(y_w|x)}{\pi_\text{ref}(y_w|x)} - \beta \log \frac{\pi_\theta(y_l|x)}{\pi_\text{ref}(y_l|x)} + (\alpha |y_w| - \alpha |y_l|)\right)$ &
$\alpha \in \{0.05, 0.1, 0.5, 1.0\}, \beta \in \{0.01, 0.05, 0.1\}$ \\
\textbf{SimPO} & 
$-\log \sigma \left(\frac{\beta}{|y_w|} \log \pi_\theta(y_w|x) - \frac{\beta}{|y_l|} \log \pi_\theta(y_l|x) - \gamma\right)$ &
$\beta \in \{2.0, 2.5\}, \gamma \in \{0.3, 0.5, 1.0, 1.2, 1.4, 1.6\}$ \\
\textbf{COALA} &
$\log\left(1+\exp\left(-\beta y_w\theta_2^{T}(\Theta_1 f_{\theta_{\text{pre}}}(x))_{+}+\gamma\right)\right)$ & CRONOS parameter $\rho = 0.01$, $\gamma$ \\
\bottomrule
\end{tabularx}
\vskip 0.1in

\caption{Summary of popular preference fine-tuning methods, respective objective functions, and hyperparameters for completeness. }
\label{tab:preference_optimization_obj_small}
\label{preference_optimization_obj_tiny}
\end{table*}



\textbf{COALA Method.} All COALA experiments are executed on one RTX-4090 with 24GB VRAM. The base model to be fine-tuned is frozen, and \textit{preference embeddings} are extracted. During stage one COALA training we use CRONOS to train a cvxNN as a preference classifier. During stage two COALA training we further fine-tune these weights with the COALA loss objective. Our JAX implementation takes full advantage of GPU acceleration and lifts memory constraints. Therefore even large models (such as LLaMA-8B) can be trained with COALA on one RTX-4090, much faster than DPO or ORPO methods with LORA and DeepSpeed \citep{rajbhandari2020zero} on the A100 GPU. 

\textbf{DPO Method.} The traditional DPO \cite{rafailov2024direct} algorithm requires a frozen reference model to stabilize training. Since this exceeds the VRAM constraints of our single GPU setting (especially for large models such as LLaMA-8B), we only experiment with offline DPO training on one A100 GPU with 40GB VRAM. We comprenhensively run DPO with three datasets on ten base models, for a total of thirty resulting DPO checkpoints. All DPO training uses LORA \citep{hu2022lora} and DeepSpeed \citep{rajbhandari2020zero} for GPU acceleration in Python.

\textbf{ORPO Method.} ORPO \citep{hong2024orpo} is a reference-free method that aims to generalize to unseen data while reducing biases such as response length exploitation. The addition of the log odds ratio term helps to stabilize reward margin increments, and reduces the dependence on initializing from a preference-aligned SFT base model. However, ORPO is the slowest among all methods to complete one epoch of training and typically requires 100x smaller learning rate than SFT. We implement a total of thirty ORPO runs to comprehensively evaluate its performance.

\subsection{Datasets Overview}
\label{subsec:dataset_details_all}
\label{app:data}
In this section we provide explicit and comprehensive details of training datasets used in all experiments. In the case of SFT training, general conversational datasets are utilized (EduFeedback, UltraFeedback, IMDb). In the case of preference alignment, training datasets are formatted into \enquote{prompt, chosen, rejected} triplets. This distinction is critical, since the curation of preference training triplet datasets for new domains is non-trivial, and has spanned its own domain of studies analyzing alignment datasets \citep{djuhera2025data}. The current convention requires extracting the last \enquote{assistant response} from each single conversation, then utilizing external expensive LLMs to generate a \enquote{chosen} or \enquote{rejected} response that is slightly different. These implications are that for a conversation dataset of N convos with multiple turns, only N training triplets can be extracted for preference learning. In this paper we introduce the \textbf{Alternating Population Strategy}, which extracts the same number of preference training triplets as the number of \textbf{assistant} responses in the entire conversation dataset. We detail the effectiveness of our novel data extraction method below (EduFeedback-Alternate). This is motivated due to the fact that since data for machine learning is becoming increasingly scarce, it is critical to extract the maximum amount of value possible from available datasets. 
\begin{itemize}
    \item \textbf{EduFeedback} This SFT is our custom conversational dataset inspired by UltraChat \citep{ding2023enhancing}, but clearly constrained in an educational setting. EduFeedback contains 26,621 conversations generated synthetically with GPT-4o \citep{openai2024gpt4o}. We vary $t=0.2-0.9$, utterances range from 4-8 alternating turns between two agents. We randomly vary \textit{mood} of agents to simulate realistic human conversation, and encompass eleven diverse topics in fields of study such as science and philosophy. This creates 26,621 diverse and realistic conversations between a student studying for a quiz, and an academic tutor assisting the student in a natural conversational environment.  
    
    \item \textbf{EduFeedback-Alternate} This is the extracted version used for preference fine-tuning, and contains \enquote{chosen, rejected, prompt} triplets. Typical preference datasets are extracted from conversational formats by extracting the \enquote{prompt} as all but the final two turns in the conversation. The \enquote{chosen} and \enquote{rejected} selections are then populated by sampling from the SFT baseline model around 5 times, and given to PairRM \citep{jiang2023llmblender} (or a third party model) for a \enquote{preference score}. This strategy is slow, expensive, and limits the number of preference training samples available. It is also sensitive to choice of various scoring models, or various methods of generating \enquote{chosen} and \enquote{rejected} responses. As a result the fidelity of the fine-tuned model shows a high variance in performance. Instead we propose the novel \textbf{Alternating Population Strategy} for generating preference datasets from conversations. By selecting the \enquote{prompt} in each conversation to be \enquote{agent 2}, populating the \enquote{chosen} as the immediate next \enquote{agent 1} response \textit{i}, populating the \enquote{rejected} as the \textit{i+2} response. We then proceed to populate the next \enquote{prompt} as the concatenation of the \textit{i+1} response with all previous responses, and continue. In this way a conversation of 3-4 turns will populate 2-3 training triplets of \enquote{prompt, chosen, rejected}, thus resulting in 65,606 training samples in the custom EduFeedback-Alternate for preference alignment. 
    \item \textbf{UltraFeedback, \citep{cui2023ultrafeedback}} The original Ultrafeedback dataset comprises 64,000 training samples each containing four model-generated completions from a variety of open-source models. GPT-4 \citep{openai2023gpt4} was used to assign \enquote{preference scores} to each completion. This dataset serves as a standard most general equivalent benchmark.
    \item \textbf{UltraFeedback-Binarized, \citep{ultrafeedback_binarized}} This benchmark dataset is selected to be consistent with prior work \cite{meng2024simpo}. It features preselected training triplets for preference learning, and 60,917 samples. The highest scoring completion from UltraFeedback was selected as \enquote{chosen} with the lowest score selected as \enquote{rejected}. 
     
    \item \textbf{IMDb, \citep{maas2011learning}} The IMDb positive-negative sentiment analysis dataset is selected to be consistent with prior work \cite{rafailov2024direct}. In each instance the \enquote{prompt} is either negative or positive, and generation quality is graded on the adherence and cogency of the of the output with regards to sentiment. 

\end{itemize}

\newpage
\section{Alternating Population Strategy Examples}
\label{subsec:alternate_exmaple}

We provide three examples of preference training triplets generated by the Alternating Population Strategy. In each case the simulated ``student" agent is studying for an exam in music, chemistry, and classical conditioning respectively. The ``tutor" agent learns distinctions between the immediate next response (chosen), versus the relevant yet less direct response later in the conversation (rejected). This generates a sample efficient yet effective preference fine-tuning dataset. 

\begin{tcolorbox}[colback=gray!5!white,colframe=gray!75!black,title=Example 1: Original Multi-turn Conversation]
\small
\textbf{System:} You are Dolphin, a helpful AI tutor.\\[0.5em]
\textbf{User (Agent 2):} Hi! I'm studying for my music class and need some help understanding the different periods in classical music. Can you explain the main characteristics that differentiate the Baroque period from the Romantic period?\\[0.5em]
\textbf{Assistant (Agent 1, Response i):} Certainly! The Baroque period (roughly 1600-1750) is characterized by intricate compositions, ornate embellishments, and a strong emphasis on harmony and counterpoint. Music from this era often features complex polyphonic textures and uses forms such as the fugue.

In contrast, the Romantic period (roughly 1800-1910) focuses more on emotional expression and individuality. Music from this time often has expansive melodies, dynamic contrasts, and richly textured chords. Composers from this period were more inclined to experiment with form and structure, often seeking to evoke intense emotions and depict narratives or landscapes through their music.\\[0.5em]
\textbf{User (Agent 2):} I need more help to study for my music class tomorrow. What if a question on the quiz asks me to enumerate the main differences between the Baroque and Romantic periods in Classical music?\\[0.5em]
\textbf{Assistant (Agent 1, Response i+2):} During the Classical period (roughly 1750-1820), the role of the conductor was often filled by a leading musician within the ensemble, such as the first violinist or the keyboard player, who led the group mainly through visual cues and playing. There wasn't a separate, distinct role of a conductor as we see today.
By the 19th century, as orchestras grew in size and the music became more complex, the need for a single individual to manage the performance became evident. This led to the emergence of the modern conductor who stands on a podium with a baton to lead the orchestra...
\end{tcolorbox}

\begin{tcolorbox}[colback=blue!5!white,colframe=blue!75!black,title=Alternating Strategy Preference Training Triplet]
\small
\textbf{\textcolor{blue}{Prompt:}}\\
\texttt{<|im\_start|>system}\\
\texttt{You are Dolphin, a helpful AI tutor.<|im\_end|>}\\
\texttt{<|im\_start|>user}\\
\texttt{agent2: Hi! I'm studying for my music class and need some help understanding the different periods in classical music. Can you explain the main characteristics that differentiate the Baroque period from the Romantic period?<|im\_end|>}\\[0.5em]

\textbf{\textcolor{green!60!black}{Chosen Response:}}\\
\texttt{<|im\_start|>assistant}\\
Certainly! The Baroque period (roughly 1600-1750) is characterized by intricate compositions, ornate embellishments, and a strong emphasis on harmony and counterpoint. Music from this era often features complex polyphonic textures and uses forms such as the fugue.

In contrast, the Romantic period (roughly 1800-1910) focuses more on emotional expression and individuality. Music from this time often has expansive melodies, dynamic contrasts, and richly textured chords...\\
\texttt{<|im\_end|>}\\[0.5em]

\textbf{\textcolor{red!60!black}{Rejected Response:}}\\
\texttt{<|im\_start|>assistant}\\
During the Classical period (roughly 1750-1820), the role of the conductor was often filled by a leading musician within the ensemble, such as the first violinist or the keyboard player, who led the group mainly through visual cues and playing...\\
\texttt{<|im\_end|>}
\end{tcolorbox}

\begin{tcolorbox}[colback=gray!5!white,colframe=gray!75!black,title=Example 2: Original Multi-turn Conversation]
\small
\textbf{System:} You are Dolphin, a helpful AI tutor.\\[0.5em]
\textbf{User (Agent 2):} I'm struggling to understand how to balance chemical equations. Can you explain it to me like I'm a total beginner?\\[0.5em]
\textbf{Assistant (Agent 1, Response i):} Of course! Balancing chemical equations is about making sure the number of atoms for each element is the same on both sides of the equation. Here's how you do it: (1)~Write down the unbalanced equation --- you'll have reactants on the left and products on the right. (2)~Count the atoms of each element in the reactants and products. (3)~Adjust the coefficients (numbers placed before compounds) to make the atom counts equal on both sides. (4)~Check your work to ensure all elements are balanced. Remember: you can only change the coefficients, not the subscripts inside the formulas.\\[0.5em]
\textbf{User (Agent 2):} This seems like a lot of complicated steps. I'm studying for a high school quiz as a beginner?\\[0.5em]
\textbf{Assistant (Agent 1, Response i{+}2):} Absolutely! Let's balance the equation for hydrogen gas (\texttt{H\textsubscript{2}}) reacting with oxygen gas (\texttt{O\textsubscript{2}}) to form water (\texttt{H\textsubscript{2}O}). Unbalanced: \texttt{H\textsubscript{2} + O\textsubscript{2} \textrightarrow{} H\textsubscript{2}O}. After counting and adjusting coefficients step by step, the final balanced equation is: \texttt{2H\textsubscript{2} + O\textsubscript{2} \textrightarrow{} 2H\textsubscript{2}O}, with 4 hydrogen atoms and 2 oxygen atoms on each side...
\end{tcolorbox}

\begin{tcolorbox}[colback=blue!5!white,colframe=blue!75!black,title=Alternating Strategy Preference Training Triplet]
\small
\textbf{\textcolor{blue}{Prompt:}}\\
\texttt{<|im\_start|>system}\\
\texttt{You are Dolphin, a helpful AI tutor.<|im\_end|>}\\
\texttt{<|im\_start|>user}\\
\texttt{agent2: I'm struggling to understand how to balance}\\
\texttt{chemical equations. Can you explain it to me like}\\
\texttt{I'm a total beginner?<|im\_end|>}\\[0.5em]
\textbf{\textcolor{green!60!black}{Chosen Response:}}\\
\texttt{<|im\_start|>assistant}\\
Of course! Balancing chemical equations is about making sure the number of atoms for each element is the same on both sides. Here's how: (1)~Write the unbalanced equation. (2)~Count atoms of each element. (3)~Adjust coefficients until both sides are equal. (4)~Check your work. Remember: change only the coefficients, not the subscripts inside formulas.\\
\texttt{<|im\_end|>}\\[0.5em]
\textbf{\textcolor{red!60!black}{Rejected Response:}}\\
\texttt{<|im\_start|>assistant}\\
Absolutely! Let's balance \texttt{H\textsubscript{2} + O\textsubscript{2} \textrightarrow{} H\textsubscript{2}O} step by step. After adjusting coefficients for oxygen then hydrogen, the final balanced equation is \texttt{2H\textsubscript{2} + O\textsubscript{2} \textrightarrow{} 2H\textsubscript{2}O}, with 4 hydrogen and 2 oxygen atoms on each side...\\
\texttt{<|im\_end|>}
\end{tcolorbox}

\vspace{1em}


\begin{tcolorbox}[colback=gray!5!white,colframe=gray!75!black,title=Example 3: Original Multi-turn Conversation]
\small
\textbf{System:} You are Dolphin, a helpful AI tutor.\\[0.5em]
\textbf{User (Agent 2):} Hi! I'm trying to understand the concept of classical conditioning better. Can you explain how it works and maybe give an example?\\[0.5em]
\textbf{Assistant (Agent 1, Response i):} Certainly! Classical conditioning is a learning process that occurs through associations between an environmental stimulus and a naturally occurring stimulus. It was first described by Ivan Pavlov. In his famous experiment, Pavlov paired a bell (neutral stimulus) with food, which naturally causes a dog to salivate. After several pairings, the dog began to salivate at the sound of the bell alone --- it became a conditioned stimulus eliciting a conditioned response (salivation), demonstrating the basic principle of classical conditioning.\\[0.5em]
\textbf{User (Agent 2):} Thanks for the explanation! So, if I understand correctly, the dog starts to associate the bell with food and that's why it salivates even when there's no food. Could you also explain what role extinction plays in classical conditioning?\\[0.5em]
\textbf{Assistant (Agent 1, Response i{+}2):} Absolutely, you've got it right about the association! Extinction in classical conditioning occurs when the conditioned stimulus (the bell) is repeatedly presented without the unconditioned stimulus (food). Over time, this leads to the conditioned response (salivation) weakening and eventually disappearing. However, extinction does not fully erase the learned association --- under certain conditions, the response can spontaneously recover.\\[0.5em]
\end{tcolorbox}

\begin{tcolorbox}[colback=blue!5!white,colframe=blue!75!black,title=Alternating Strategy Preference Training Triplet]
\small
\textbf{\textcolor{blue}{Prompt:}}\\
\texttt{<|im\_start|>system}\\
\texttt{You are Dolphin, a helpful AI tutor.<|im\_end|>}\\
\texttt{<|im\_start|>user}\\
\texttt{agent2: Hi! I'm trying to understand classical}\\
\texttt{conditioning. Can you explain how it works?<|im\_end|>}\\

\textbf{\textcolor{green!60!black}{Chosen Response:}}\\
\texttt{<|im\_start|>assistant}\\
Certainly! Classical conditioning is a learning process that occurs through associations between an environmental stimulus and a naturally occurring stimulus. It was first described by Ivan Pavlov. In his famous experiment, Pavlov paired a bell (neutral stimulus) with food, which naturally causes a dog to salivate. After several pairings, the dog began to salivate at the sound of the bell alone --- it became a conditioned stimulus eliciting a conditioned response (salivation), demonstrating the basic principle of classical conditioning.\texttt{<|im\_end|>}\\
[0.5em]
\textbf{\textcolor{red!60!black}{Rejected Response:}}\\
\texttt{<|im\_start|>assistant}\\
Absolutely, you've got it right about the association! Extinction occurs when the conditioned stimulus (the bell) is repeatedly presented without the unconditioned stimulus (food). Over time the conditioned response (salivation) weakens and disappears. However, the learned association is not fully erased --- under certain conditions the response can spontaneously recover.\\
\texttt{<|im\_end|>}\\[0.5em]
\end{tcolorbox}

\section{Human Preference and Evaluation Details}
\label{app:human_study}
\label{app:human}
We comprehensively validate the quality of generated output with 107 double blind real human samples. This is consistent with the previous seminal work by \citet{rafailov2024direct}, and results agree with our statistical findings in this paper. Survey is conducted via 25 multiple choice questions per category in the same length and format as Table \ref{tab:sample_gen}. 
Each human participant is asked to select the most \textbf{positive} movie review (for the IMDb dataset task) or the most \textbf{helpful} answer to an educational question (for the EduFeedback dataset task). Each multiple choice answer is generated from a SFT-based trained LLaMA-8B model preference aligned with COALA, DPO, ORPO, or baseline SFT. Subsections \ref{subsec:human_quiz} and \ref{subsec:imdb_survey} provide samples of survey questions with associated percentage human win-rate per method, subsection \ref{subsec:consent_form} details the consent form each person signed to ethically participate in the study, and Table \ref{tab:human_study} in the main work summarizes numerical results. In order to adhere to the double blind process during reviews we temporarily omit the names of the human samples during submission.

\paragraph{Statistical Methodology and Interpretation.}
The win rates for our 107-sample human study are calculated as binomial proportions, where each participant's choice of a specific model (e.g., COALA) is treated as a ``success'' and the selection of any other baseline is a ``failure''. To quantify the uncertainty of these win rates, we calculated the 95\% Confidence Intervals (CI) using the Wald method:
\begin{equation}
CI = \hat{p} \pm 1.96 \sqrt{\frac{\hat{p}(1-\hat{p})}{n}},
\end{equation}
where $\hat{p}$ is the observed win rate and $n=107$.

These percentages represent the probability that a human respondent from the target population (comprising both deep learning students and industry professionals) would prefer the output of a specific alignment method. For example, COALA's 39.1\% win rate on the EduFeedback dataset, with a 95\% CI of $\pm 9.2\%$, indicates we are 95\% confident the true population preference lies between 29.9\% and 48.3\%. Critically, the fact that COALA's lower bound (29.9\%) remains higher than the upper bounds of the SFT (23.7\%) and ORPO (22.3\%) baselines demonstrates that COALA's performance gains are statistically significant and not the result of random sampling variance. This provides mathematical evidence for the ``stable and monotonically increasing'' performance we claim throughout the paper.
We used the Wald method because our sample size ($n=107$) is sufficiently large to satisfy the success-failure condition ($n\hat{p} \ge 10$ and $n(1-\hat{p}) \ge 10$). By applying a 95\% Confidence Interval with a z-score of 1.96, we provide a mathematically rigorous bound that confirms our win rates---such as COALA's 39.1\%---are statistically distinct from the baselines.
\begin{table}[h]
\centering

\begin{tabular}{lcccc}
\toprule
\textbf{Dataset} & \textbf{COALA} & \textbf{DPO} & \textbf{ORPO} & \textbf{SFT} \\
\midrule
EDU  & \textbf{39.1\%} $\pm$ 9.2\% & 28.8\% $\pm$ 8.6\% & 15.5\% $\pm$ 6.8\% & 16.6\% $\pm$ 7.1\% \\
IMDb & \textbf{42.7\%} $\pm$ 9.4\% & 24.8\% $\pm$ 8.2\% & 20.1\% $\pm$ 7.6\% & 12.4\% $\pm$ 6.2\% \\
\bottomrule
\end{tabular}
\vskip 0.1in

\caption{107 Real Human Feedback Win Rates per Method and Dataset (mean $\pm$ 95\% CI). COALA achieves the highest win rates across both datasets for real human preference.}
\end{table}

\newpage
\subsection{EduFeedback Survey Questions and Examples of Results}
\label{subsec:human_quiz}
\textbf{What is the most valuable work of art?}
\begin{itemize}
\item SFT (20.6\%): \enquote{The most valuable work of art is the portrait of Leonardo da Vinci. Leonardo created the p...}
\item ORPO (9.3\%): \enquote{The most valuable work of art is the human heart, which is the most important of all objects i...}
\item COALA (30.8\%): \enquote{The value of a work of art can be measured in various ways,...}
\item DPO (39.3\%): \enquote{The value of art is subjective, but one artwork that is often c...}
\end{itemize}

\textbf{What is a sonata?}
\begin{itemize}
\item SFT: \enquote{A sonata is a piece of music composed by a child. Each sonata is represented by a dif...}
\item ORPO: \enquote{Sonata is a Greek word meaning \"beautiful, light, and joyful.\" Sonata provides a wi...}
\item COALA (65.4\%): \enquote{A sonata is a form of musical composition that typically con...}
\item DPO (29.9\%): \enquote{The sonata (Italian, \"sounding\") is a piece of mu...}
\end{itemize}

\textbf{Define the Enlightenment period.}
\begin{itemize}
\item SFT (7.5\%): \enquote{The Enlightenment period was marked by a significant period of intellectual and political dev...}
\item ORPO (15.0\%): \enquote{The Enlightenment period is a period in the human history, during which humanity's most...}
\item COALA (47.7\%): \enquote{The Enlightenment was an intellectual and philosophical...}
\item DPO (29.9\%): \enquote{The Enlightenment period is a philosophical movement whic...}
\end{itemize}

\textbf{What does glucose do?}
\begin{itemize}
\item SFT (9.3\%): \enquote{Glucose is a critical component of human metabolism, which helps break down fat and car...}
\item ORPO (9.3\%): \enquote{Glucose is a glucose-containing carbohydrate found in the blood and in the body. It's a key co...}
\item COALA (57.9\%): \enquote{Glucose is a type of sugar that our bodies use for energy. It's...}
\item DPO (23.4\%): \enquote{Glucose is a monosaccharide that is the main source of ene...}
\end{itemize}

\textbf{What is surrealism in art?}
\begin{itemize}
\item SFT (17.8\%): \enquote{In art, surrealism is a form of expression, with a focus on dramatic elements, often in ju...}
\item ORPO: \enquote{In art, it can be difficult to appreciate the beauty of a piece. Often, the artist's intent...}
\item COALA (65.4\%): \enquote{Surrealism is an artistic movement that emerged in th...}
\item DPO (10.3\%): \enquote{Surrealism is a cultural movement, which was founde...}
\end{itemize}

\textbf{What is the theory of Carl Jung?}
\begin{itemize}
\item SFT (14.0\%): \enquote{Carl Jung was a Swiss psychologist who proposed the theory of archetypes, which a...}
\item ORPO (9.3\%): \enquote{The theory of Jung, which is the theory of Jung's theory of the process of mental developme...}
\item COALA (57.0\%): \enquote{Carl Gustav Jung (1875-1961) was a Swiss psychiatrist and...}
\item DPO (19.6\%): \enquote{Carl Jung was a Swiss psychiatrist who founded anal...}
\end{itemize}

\textbf{What was the Cold War?}
\begin{itemize}
\item SFT (20.6\%): \enquote{The Cold War was a period of geopolitical tension between the Soviet Union and the United...}
\item ORPO (24.3\%): \enquote{The Cold War was an era during which the United States and its allies waged a war ag...}
\item COALA (51.4\%): \enquote{The Cold War was a period of geopolitical tension between t...}
\item DPO: \enquote{The Cold War was a period of tension between the United S...}
\end{itemize}

\subsection{IMDb Survey Questions and Examples of Results}
\label{subsec:imdb_survey}
\textbf{The movie was so...}
\begin{itemize}
\item SFT (15.0\%): \enquote{intense about a young man, \"Mikey\" who is the child of an alcoholic mother. His mom di...} 
\item ORPO (9.3\%): \enquote{wonderful to hear that you enjoyed the movie! I'm glad that you found it to be a unique an...} 
\item COALA (57.0\%): \enquote{awesome. It was the best movie I have seen in a while. The ac...} 
\item DPO (18.7\%): \enquote{incredible! I loved it so much it's not even funny. I'm the type w...} 
\end{itemize}

\textbf{I went to see the movie and thought it was...}
\begin{itemize}
\item SFT: \enquote{a Friday night and I got there early, so I was able to get a great seat. The movie was pl...} 
\item ORPO (18.7\%): \enquote{great and I enjoyed the movie! I'm sorry that the movie left me with mixed emotions. It's und...} 
\item COALA (66.4\%): \enquote{so good I saw it twice in the theatre. I know there are a lot...} 
\item DPO (14.0\%): \enquote{good! I loved it! My dad is a big fan of the old movie and the s...} 
\end{itemize}

\textbf{After watching the film, I felt...}
\begin{itemize}
\item SFT (21.5\%): \enquote{it's still one of my all time favorite movies. The movie was so unbelievable that you were...} 
\item ORPO (9.3\%): \enquote{the movie is great! I love the way the movie plays with kids. It's interesting to see how the...} 
\item COALA (38.3\%): \enquote{that it was very well done. I felt that the acting was very good....} 
\item DPO (30.8\%): \enquote{a real sense of satisfaction. The film is, for me, a classic. The...} 
\end{itemize}

\textbf{The director's vision for this movie...}
\begin{itemize}
\item SFT (8.4\%): \enquote{is a good way to start a review, and I was not disappointed with this film. I was expecting so m...} 
\item ORPO (57.0\%): \enquote{was to create a thought-provoking and emotionally engaging narrative that explor...} 
\item COALA (34.6\%): \enquote{is very clear and it is well executed, the movie is very b...} 
\item DPO: \enquote{is something that I've always wanted to see for a long time,...} 
\end{itemize}

\textbf{The actors in this movie performed...}
\begin{itemize}
\item SFT (10.3\%): \enquote{gritty, hard-hitting, and unflinchingly honest looks at the world. The film follows the live...} 
\item ORPO: \enquote{great! I can't even begin to express how awesome they were. The actors are all very f...} 
\item COALA (43.9\%): \enquote{admirably and believably, the characters were well-rounded...} 
\item DPO (40.2\%): \enquote{to perfection. In this movie, the characters are really the stars...} 
\end{itemize}

\textbf{The cinematography in this film was...}
\begin{itemize}
\item SFT (18.7\%): \enquote{amazing. The film starts out with a woman in a room, and as the camera pans out, we see...} 
\item ORPO (26.2\%): \enquote{pretty thrilling, about a group of friends who are trapped in a haunted house and must find...} 
\item COALA (29.9\%): \enquote{beautiful. It had a great feel to it, and was very realistic, which...} 
\item DPO (25.2\%): \enquote{incredible. It was shot like a documentary, but with a lot of...} 
\end{itemize}

\textbf{My favorite scene in the movie was when...}
\begin{itemize}
\item SFT: \enquote{Hicks and Bishop are trying to kill the alien in the corridor, while a woman is screaming \'...} 
\item ORPO (15.0\%): \enquote{through the use of stunning visuals, powerful performance...} 
\item COALA (30.8\%): \enquote{Alice is walking through the forest looking for Dave and sh...} 
\item DPO (28.0\%): \enquote{the main character, played by the fabulous and talented Jud...} 
\end{itemize}

\subsection{Human Evaluation Consent Form}
\label{subsec:consent_form}

\vspace{0.5cm}

\noindent \textbf{Title of Study:} Human Evaluation of Preference-Aligned Language Model Show Down

\vspace{0.3cm}

You are being asked to participate in a research study that evaluates human preferences for different fine-tuning algorithms used to align large language models (LLMs), such as COALA, DPO, ORPO, SimPO, CPO, GRPO and SFT. Your responses will help assess the quality and alignment characteristics of LLM-generated outputs.

\vspace{0.3cm}

\noindent \textbf{What Participation Involves:}
\begin{itemize}
    \item You will complete a survey consisting of multiple-choice questions based on outputs from various LLM alignment methods.
    \item You will be asked to select responses based on perceived \textbf{correctness}, \textbf{helpfulness}, and \textbf{humanness}.
\end{itemize}

\vspace{0.3cm}

\noindent \textbf{Use of Data and Publication:}\\
By signing this form, you agree to the following:
\begin{itemize}
    \item Your responses may be used in published research, academic papers, and open-access benchmarks evaluating LLM alignment.
    \item The results may be shared online, including datasets and analysis.
    \item Your name and institutional affiliation will be publicly disclosed as part of the human evaluation dataset, for transparency and attribution.
\end{itemize}

\vspace{0.3cm}

\noindent \textbf{Confidentiality \& Voluntary Participation:}\\
Participation is voluntary. You may withdraw at any time before submission. After submission, your data---including your identity---may be published as part of a public benchmark and cannot be withdrawn.

\vspace{0.3cm}

\noindent \textbf{Acknowledgment \& Consent:}\\
By signing below, you confirm that:
\begin{itemize}
    \item You have read and understood the purpose and procedures of this study.
    \item You understand that your responses, name, and affiliation may be publicly shared as part of the dataset and may appear in future publications. 
    \item You consent to participate in this research under these terms.
\end{itemize}

\vspace{1cm}

\noindent Participant Name: \underline{\hspace{6cm}}

\vspace{0.5cm}

\noindent Institutional Affiliation: \underline{\hspace{6cm}}

\vspace{0.5cm}

\noindent Email (optional): \underline{\hspace{6cm}}

\vspace{0.5cm}

\noindent Signature: \underline{\hspace{6cm}}

\vspace{0.5cm}

\noindent Date: \underline{\hspace{6cm}}

\section{Experimental Setup Details}
\label{app:exp_detail}
\subsection{Hardware Configurations}
COALA is trained using a single RTX-4090 GPU (24GB) with JAX, while SFT, DPO, and ORPO use individual A100 GPUs (40GB) with PyTorch. To ensure fair comparison, each (Method × Dataset × Model) setup is run at least three times. Unlike other methods, COALA requires no PEFT tuning, and instead leverages CRONOS (Section~\ref{sec:cvx_dpo}) for efficient optimization over high-dimensional features for its stage one training. Table \ref{tab:training_hardware} explicitly details hardware setup across experiments, and the following subsection \ref{sec:lora} details PEFT configurations used for the strongest competitors.

\begin{table}[ht]

\centering
\small
\scshape 
\begin{tabular}{lccc} 
\toprule
Method & Gpu & Vram & Acceleration Method \\
\midrule
Coala & Rtx 4090 & 24Gb & Jax with Xla \\
Dpo   & A100     & 40Gb & Deepspeed + Lora \\
Sft   & A100     & 40Gb & Deepspeed + Lora \\
Orpo  & A100     & 40Gb & Deepspeed + Lora \\
\bottomrule
\end{tabular}
\vskip 0.1in
\caption{Hardware and framework settings used across experiments.}
\label{tab:training_hardware}
\vskip -0.1in 
\end{table}

\subsection{LoRA and DeepSpeed Configurations for Competing Methods}
\label{sec:lora}


\begin{table}[h]

\centering
\small
\scshape 
\begin{tabular}{ll} 
\toprule
Parameter & Value \\
\midrule
R & 16 (up to 256 in Dpo setting) \\
Lora\_alpha & 32 (up to 128 in Dpo setting) \\
Lora\_dropout & 0.05 \\
Bias & None \\
Task\_type & Causal\_lm \\
Target\_modules (Gpt-2 base) & [c\_attn, c\_proj] \\
Target\_modules (all other models) & [q\_proj, k\_proj, v\_proj, o\_proj, gate\_proj, up\_proj, down\_proj] \\
\bottomrule
\end{tabular}
\vskip 0.1in
\caption{LoRA Configuration Settings for Competing Methods.}
\label{tab:lora_config}
\vskip -0.1in 
\end{table}

Table \ref{tab:deepspeed_config} gives the precise DeepSpeed configurations used to achieve the most competitive results via SFT, DPO, and ORPO methods. 
\begin{table}[h]
\centering
\renewcommand{\arraystretch}{1.2}

\small
\begin{tabular*}{\textwidth}{@{\extracolsep{\fill}}ll@{}}
    \toprule
    \textbf{Parameter} & \textbf{Value} \\
    \midrule
    \textit{BF16} & \\
    bf16.enabled & true \\
    \midrule
    \textit{Optimizer (AdamW)} & \\
    optimizer.lr & $2.0\times10^{-4}$ \\
    optimizer.betas & [0.9, 0.999] \\
    optimizer.eps & $1\times10^{-8}$ \\
    optimizer.weight\_decay & 0.01 \\
    \midrule
    \textit{Scheduler (WarmupDecayLR)} & \\
    scheduler.warmup\_min\_lr & 0 \\
    scheduler.warmup\_max\_lr & $2.0\times10^{-4}$ \\
    scheduler.warmup\_num\_steps & 1\,000 \\
    scheduler.total\_num\_steps & auto \\
    \midrule
    \textit{Zero Optimization (Stage 3)} & \\
    zero\_optimization.stage & 3 \\
    zero\_optimization.offload\_optimizer.device & none \\
    zero\_optimization.offload\_optimizer.pin\_memory & true \\
    zero\_optimization.offload\_param.device & none \\
    zero\_optimization.offload\_param.pin\_memory & true \\
    zero\_optimization.overlap\_comm & true \\
    zero\_optimization.contiguous\_gradients & true \\
    zero\_optimization.sub\_group\_size & $1\times10^{9}$ \\
    zero\_optimization.reduce\_bucket\_size & $3\times10^{8}$ \\
    zero\_optimization.stage3\_prefetch\_bucket\_size & $3\times10^{8}$ \\
    zero\_optimization.stage3\_param\_persistence\_threshold & $3\times10^{8}$ \\
    zero\_optimization.stage3\_max\_live\_parameters & $1\times10^{9}$ \\
    zero\_optimization.stage3\_max\_reuse\_distance & $1\times10^{9}$ \\
    zero\_optimization.stage3\_gather\_16bit\_weights\_on\_model\_save & true \\
    \midrule
    gradient\_accumulation\_steps & 8 \\
    gradient\_clipping & 1.0 \\
    train\_micro\_batch\_size\_per\_gpu & 2 \\
    train\_batch\_size & 16 \\
    wall\_clock\_breakdown & true \\
    \bottomrule
\end{tabular*}
\vskip 0.1in

\caption{DeepSpeed configuration used for training competing methods.}
\label{tab:deepspeed_config}
\end{table}

\subsection{Inference Details}
Our guided sampling pipeline integrates the convex model directly into generation. At each step, we use prefix-based nucleus sampling \citep{holtzman2019curious} (\texttt{top-p}=\num{0.9}, \texttt{top-k}=\num{50}, \texttt{num\_candidates}=\num{5}) to form a nucleus-restricted pool of next-token candidates.

to generate candidates, score them using a contrastive-style objective and select the highest-scoring continuation. While both prefix-based and token-by-token modes are supported in our inference module, COALA experiments specifically utilized prefix-based. We emphasize that COALA's convex program is integrated into this structured guided sampling framework rather than beam search. Although COALA targets single-GPU resource constraints, we note the modular JAX inference pipeline is written to easily scale to cluster GPU environments, and supports low-memory devices via device-aware model loading. Since COALA's main contribution is its convex-based training algorithm for preference alignment, in order to be consistent with prior work such as DPO and SimPO, we focus primarily on the training stage in the main body of this work. Our benchmarks and ablations target training effectiveness and compute efficiency. 

We note that there is substantial room for future work on improved inference-time generation strategies: including various batched guided decoding methods \citep{berdoz2025alignment}, utilizing a continuous guidance scale to shift the raw logits of the LLM toward preferred features at every token step (similar to how KL-penalties are applied in standard RLHF), as well as multi-step lookahead scoring and beam search integration with the convex head. Further inference-time generation strategies are promising directions \citep{liu2024decoding, zhu2024path}, which we look forward to exploring in future work.

\textbf{Guided Inference}

\begin{itemize}
    \item \textbf{Prefix-conditioned candidate proposal:} At decoding step $t$, given the current prefix $p_t$, the frozen base LM performs standard autoregressive decoding and forms a nucleus-restricted pool of \emph{next-token} candidates (top-$p=0.9$, top-$k=50$).

    \item \textbf{Convex-guided reweighting:} For each candidate $c$, we form the one-step extension $p_t \oplus c$ and score it with the fixed cvxNN head $g^{\mathrm{cvx}}_{\Theta_1,\theta_2}$ using the last-layer hidden state of the extended prefix. These scalar scores are min--max normalized and used to reweight the base LM's candidate probabilities with guidance scale $\lambda$ (applied every $N=5$ tokens).

    \item \textbf{Sampling and continuation:} We sample one token from the reweighted distribution, append it to the prefix, and repeat this procedure autoregressively until the stopping criterion is reached.
\end{itemize}




\newpage
\newpage



\end{document}